\newtheorem{theorem}{Theorem}[section]
\newtheorem{proposition}{Proposition}[section]
\newcommand{\Bx}[1]{\left\lbrace#1\right\rbrace}
\newcommand{\Hx}[1]{\left[#1\right]}
\newcommand{\Px}[1]{\left(#1\right)}
\newcommand{\Nx}[1]{\left|#1\right|}
\newcommand{\E}{\mathbb{E}}
\newcommand{\R}{\mathbb{R}}
\newcommand{\vr}[1]{\boldsymbol{#1}}
\DeclareMathOperator*{\argmax}{arg\,max}
\DeclareMathOperator*{\argmin}{arg\,min}
\title{MLPs Learn In-Context on \\ Regression and Classification Tasks}
\author{William L. Tong \& Cengiz Pehlevan\\
School of Engineering and Applied Sciences \\
Center for Brain Sciences \\
Kempner Institute for the Study of Artificial and Natural Intelligence \\
Harvard University, Cambridge, MA 02138 \\
\texttt{\{wtong@g,cpehlevan@seas\}.harvard.edu}
}
\begin{document}

\maketitle

\begin{abstract}
In-context learning (ICL), the remarkable ability to solve a task from only input exemplars, is often assumed to be a unique hallmark of Transformer models. By examining commonly employed synthetic ICL tasks, we demonstrate that multi-layer perceptrons (MLPs) can also learn in-context. Moreover, MLPs, and the closely related MLP-Mixer models, learn in-context comparably with Transformers under the same compute budget in this setting. We further show that MLPs outperform Transformers on a series of classical tasks from psychology designed to test relational reasoning, which are closely related to in-context classification. These results underscore a need for studying in-context learning beyond attention-based architectures, while also challenging prior arguments against MLPs' ability to solve relational tasks. Altogether, our results highlight the unexpected competence of MLPs in a synthetic setting, and support the growing interest in all-MLP alternatives to Transformer architectures. It remains unclear how MLPs perform against Transformers at scale on real-world tasks, and where a performance gap may originate. We encourage further exploration of these architectures in more complex settings to better understand the potential comparative advantage of attention-based schemes.
\end{abstract}

\section{Introduction}
The last few years have witnessed meteoric progress in neural language models. Catalyzed by the Transformer architecture and driven by a steady increase in scale, these aptly-named Large Language Models (LLMs) demonstrate unprecedented competence in drafting grammatical text, answering questions, summarizing content, generating creative output, and even reasoning through non-trivial puzzles \citep{sparks_of_agi,gpt3,gpt4}. 

Crucial to an LLM's proficiency is its ability to learn in-context \citep{lu_llm_in_context,dong_icl_survey,gpt3}. In-context learning (ICL) refers to a task paradigm where exemplars from a novel task are presented during inference time rather than during training (Figure \ref{fig:icl_tasks}a). The model must then respond correctly to a query based only on these novel exemplars. No weight updates occur throughout this process; rather, the model infers the task from the input exemplars and, despite having fixed weights, produces the correct output.

ICL is commonly assumed to be a unique ability of Transformers, and explanations of the phenomenon often ground their constructions in attention-based architectures \citep{akyurek_what_alg_is_icl,icl_grad_descent,zhang_reg_emb,reddy_mechanistic,lu_icl_lin_transf}. On controlled regression and classification tasks targeted specifically for evaluating ICL, we demonstrate that the simple multi-layer perceptron (MLP) can also learn in-context --- and moreover, \textit{learn in-context competitively with the full Transformer given the same compute budget}.\footnote{Universal approximation by MLPs suggests that they may be able to learn in-context, though at uncertain cost. We demonstrate that MLPs learn in-context without significantly larger compute (and occasionally quite a bit smaller) than Transformers.} These results suggest that ICL is not an exclusive feature of attention-based architectures, and highlights the need for studying the phenomenon in a broader setting.

\paragraph{Tasks.} We focus on controlled tasks commonly studied in the ICL literature, where the specific capacity for in-context learning can be precisely characterized. These tasks are necessarily synthetic approximations of natural language ICL prompting, but allow us to disambiguate a model's capacity for in-context learning from its ability to attain natural language fluency. In Section \ref{sec:icl_tasks}, we examine ICL versions of regression \citep{garg_simple_function_classes,akyurek_what_alg_is_icl,raventos_pretraining_task_diversity,zhang_reg_emb} and classification \citep{reddy_mechanistic,chan_data_distrib_props}. As the two primary task paradigms of machine learning, regression and classification form a representative basis for measuring ICL competency. In Section \ref{sec:relational_tasks}, we consider a series of classical tasks used in the psychology literature to probe relational reasoning \citep{campbell_griffiths_rbm,skinner_match,sable_oddball}, which are functionally in-context classification. On these tasks, we find that MLPs \textit{outperform} Transformers, challenging common beliefs about MLPs' proficiency at relational reasoning (see Appendix \ref{app:relational_reasoning} for an extended discussion). In focusing on controlled tasks, we avoid confounds irrelevant to ICL introduced by naturalistic settings like language and vision. Nonetheless, our findings remain consistent with existing results that \textit{do} test MLPs in these more complex domains \citep{mlp_mixer,liu_g_mlp,fusco_pnlp,bachmann_tale_of_inductive_bias}.

\paragraph{Ground rules.} To ensure different architectures are comparable across tasks, we observe the following ground rules. First, we compare models based on the total compute required for training (measured in peta-floating point operations, PFLOPs), which summarizes influences like parameter count, training iterations, and architectural efficiency. Details on how we compute this quantity are provided in Appendix \ref{app:compute}. Measuring by compute reflects the practical use of these models, fairly compares architectures by performance per floating-point cost, and is an established scale for defining neural scaling laws \citep{kaplan_scaling_laws}. Second, where a single model is required, we select the best model configuration as measured by loss, keeping compute cost equal across architectures. Data are presented online, reflecting the ``one-pass" setting common in training large language models \citep{gpt3}. Specific model and task configurations are enumerated in Appendix \ref{app:model_task_details}.

\subsection{Related work}
In-context learning has been widely studied in a number of controlled settings. In particular, ICL has been reproduced for linear regression, where a Transformer trained to perform the task can extrapolate to novel input/label pairs provided in-context \citep{garg_simple_function_classes,akyurek_what_alg_is_icl,raventos_pretraining_task_diversity,wu_how_many_pretraining_tasks_for_icl,bai_transformers_as_statisticians,li_transformers_as_algorithms,lu_icl_lin_transf}. Proposed mechanisms whereby a Transformer accomplishes the feat include that the Transformer implements some form of gradient descent \citep{icl_grad_descent,akyurek_what_alg_is_icl} or recapitulates least-squares or Ridge regression \citep{zhang_reg_emb,akyurek_what_alg_is_icl,lu_icl_lin_transf}. It has also been observed that a Transformer interpolates between \textit{in-weight learning} (IWL), the traditional paradigm where the model learns specific examples through training, to in-context learning, where the model uses only exemplars provided in the input context at inference time \citep{raventos_pretraining_task_diversity,wu_how_many_pretraining_tasks_for_icl}. Such a transition occurs as a function of \textit{data diversity}, where datasets with more distinct examples encourage the development of ICL competency. Analogous phenomena have been observed in in-context classification tasks \citep{chan_data_distrib_props,reddy_mechanistic}. Impressively, the ICL performance attained in these tasks by Transformers approaches Bayes optimality \citep{xie_bayesian,bai_transformers_as_statisticians,li_transformers_as_algorithms,ahuja_bayesian_prism,lu_icl_lin_transf}.

These studies nearly all ground their investigations in Transformer models, and explicitly assume that the model uses an attention mechanism to implement ICL. The exceptions include \citet{chan_data_distrib_props}, who discover that recurrent neural networks (both vanilla RNNs and LSTMs) are unable to learn an in-context classification task under the same conditions where a Transformer can, and \citet{xie_bayesian}, who discover that LSTMs \textit{can} in fact learn in-context on a synthetic language modeling task. Recently, \citet{lee_non_att_icl} found that a wide variety of causal sequence models can learn in-context on a broad array of toy tasks, with varying degrees of success. \citet{park_mamba_icl} support this finding by showing how state space models and their hybrids with Transformers can learn in-context competitively. To the best of our knowledge, no prior work has examined in-context learning in vanilla MLPs.

The current resurgence of interest in applying MLPs to modern, complex tasks originates with \citet{mlp_mixer}, which introduced the MLP-Mixer model. Mixers operate by alternating MLPs across the dimensions of the input, treating the remaining dimensions as batch dimensions. Despite their simplicity, Mixers attain state-of-the-art performance on image classification, recalling the broad observation that ``less inductive bias is better" \citep{bitter_lesson,bachmann_tale_of_inductive_bias}. In the ensuing years, ``all-MLP" models based primarily on MLP components have spread across many areas including vision \citep{bachmann_tale_of_inductive_bias} and natural language \citep{liu_g_mlp,fusco_pnlp}. While strong performance has been documented on natural language, less is known about MLPs' specific proficiency for ICL, and how it compares with Transformer models. In this study, we select a series of controlled, representative tasks that clarify an MLP's surprising competence for ICL. Our findings underscore the ultimate utility of MLPs, uncovering avenues of both theoretic and practical interest.

\begin{figure}
    \centering
    \includegraphics[width=0.97\linewidth]{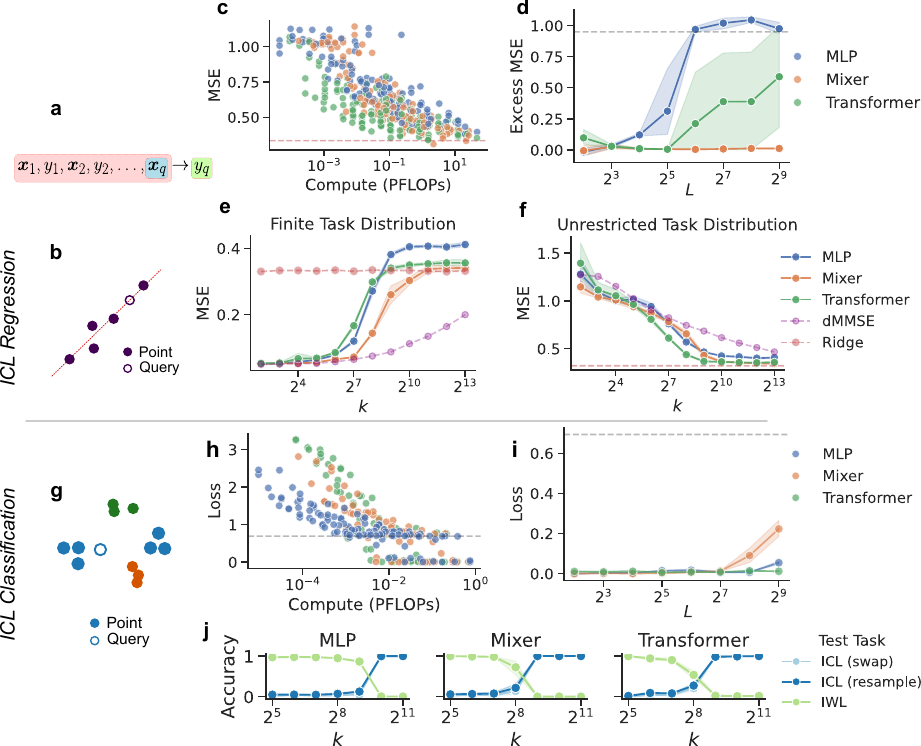}
    \caption{\textbf{ICL regression and classification results.}
    \textbf{(a)} ICL presents context exemplars from a novel task (\textit{red}), followed by a query input (\textit{blue}). The model must infer the solution (\textit{green}) based on the context. \textbf{(b)} ICL regression example. The model receives linearly-related input points, and must regress the query point. 
    \textbf{(c)} Compute vs. MSE on the unrestricted task distribution. Each point represents a single model, with particular parameters and training iterations. At large compute, MSE is approximately equal across all architectures. The red line corresponds to the Bayes optimal Ridge MSE. \textbf{(d)} Excess MSE (MSE above Bayes optimal) for varying context length $L$ on the unrestricted task distribution. Excess MSE remains flat for Mixers, but rises somewhat for Transformers. MLPs fail to learn in-context at all beyond $2^6$ context exemplars. The grey line corresponds to the excess MSE incurred by always guessing zero. \textbf{(e, f)} IWL to ICL transition with increasing data diversity. We train on a finite distribution with $k$ weights, then test on both the finite training distribution and the unrestricted distribution. All models exhibit a transition from IWL (represented by dMMSE) to ICL (represented by Ridge) as $k$ increases. Note: it is possible to ``outperform" Bayes optimal Ridge on the finite training distribution by learning in-weight the underlying $\vr{\beta}$'s. \textbf{(g)} ICL classification example, with burstiness $B = 3$. Multiple clusters may share the same label. 
    \textbf{(h)} Compute vs. cross entropy loss on ICL classification, with $k = 2048$ clusters, $B = 4$, and $L = 8$, which pushes all models to learn in-context. At large compute, all architectures attain near-zero cross entropy loss. The gray line corresponds to loss obtained from placing equal probability on the 2 (of $C = 32$) labels present in context. \textbf{(i)} Cross entropy loss for varying context length $L$ on the task configuration in (h). Loss is relatively flat for all architectures, though it increases a little for Mixers. \textbf{(j)} IWL to ICL transition with increasing data diversity, where $L = 8$ and $B = 4$. All models exhibit a transition from IWL to ICL as the number of clusters $k$ increases. \textbf{(all)} We use $n = 8$ dimension inputs. All line plots feature 95 percent confidence intervals about the mean, estimated from 5 replications.}
    \label{fig:icl_tasks}
\end{figure}

\section{Experiment: In-context tasks} \label{sec:icl_tasks}
We begin by exploring MLPs' behavior in a controlled ICL format, where their specific capacities and weaknesses can be precisely characterized. Specifically, we examine two tasks: in-context regression and in-context classification.

\subsection{ICL regression}
We present in-context regression following its common formulation \citep{garg_simple_function_classes,zhang_reg_emb}. The input consists of a sequence of values $(\vr{x}_1,y_1), (\vr{x}_2,y_2), \ldots, (\vr{x}_L,y_L)$, where $\vr{x}_i \in \R^n$ and $y_i \in \R$. The $\vr{x}_i, y_i$ pairs are linearly related through a set of weights $\vr{\beta} \in \R^n$ such that $y_i = \vr{x}_i \cdot \vr{\beta} + \varepsilon$, with noise $\varepsilon \sim \mathcal{N}(0, \sigma^2)$. Finally, the input includes a query $\vr{x}_q$. The model output is a single scalar regressed against the corresponding $y_q$. Crucially, the weights $\vr{\beta}$ vary between input sequences. The model cannot rely on learning any one $\vr{\beta}$. Rather, it must infer from context exemplars $(\vr{x}_i,y_i)$ what the corresponding $\vr{\beta}$ must be, and use this to predict the correct output $y_q$. Figure \ref{fig:icl_tasks}b illustrates the task, with additional details in Appendix \ref{app:model_task_details}. 

In the main text, our task fixes the number of context points at $L$. A common variation on this tasks allows the number of context points to vary, and trains the model autoregressively. Results on this autoregressive variation are presented in Figure \ref{fig:autoreg}, and are unchanged from the fixed-length case.

Following \citet{raventos_pretraining_task_diversity}, we consider two different task distributions: finite and unrestricted. For the \textit{finite} distribution, we fix a finite pool of weights before training $\vr{\beta}_1, \vr{\beta}_2, \ldots, \vr{\beta}_k$, where $\vr{\beta}_i \sim \mathcal{N}(\vr{0}, \vr{I}/n)$. For each input, we sample a new $\vr{\beta}$ by selecting uniformly at random one weight from the pool $\Bx{\vr{\beta_i}}_{i=1}^k$. Larger $k$ corresponds to higher data diversity. For the \textit{unrestricted} distribution, a new set of weights is sampled for each input $\vr{\beta} \sim \mathcal{N}(\vr{0}, \vr{I}/n)$. The unrestricted distribution can be thought of as the $k \rightarrow \infty$ case, and requires full ICL competency in order to infer the correct weights relating the context exemplars. Unless otherwise stated, we use $n = 8$ dimensional inputs.

\paragraph{Results.} We first consider how MLPs perform compared to Transformers on in-context regression. To do so, we train and test using online samples drawn from the unrestricted task distribution, requiring all models to learn an in-context solution. Figure \ref{fig:icl_tasks}c plots the MSE achieved by different architectures as a function of total compute. With sufficient compute, MLPs, Mixers, and Transformers \textit{all} perform in-context regression with near optimal MSE, which is given by Ridge regression on context points using the Bayes optimal regularization parameter (Appendix \ref{app:icl_reg}). For smaller compute, Transformers attain somewhat better MSE than their MLP counterparts, though the difference is modest and performance across all three architectures overlaps substantially.

One domain in which a vanilla MLP is decisively worse than a Transformer is for long context length. 
Figure \ref{fig:icl_tasks}d plots the excess MSE obtained after training and testing on the unrestricted task distribution for varying number of points in the context, where \{excess MSE\} = \{model MSE\} - \{Bayes optimal Ridge MSE\}. The Transformer generally approaches the optimal MSE regardless of context length, though it performs with less stability for longer contexts. The vanilla MLP worsens quickly with larger contexts and approaches the performance of an estimator that returns zero for every input. Strikingly, the MLP mixer does not exhibit the same sensitivity to context length, and continues attaining the Bayes optimal MSE consistently even for very long contexts.

One final observation: as data diversity increases, Transformers exhibit a transition from \textit{in-weight learning} (IWL), the traditional paradigm where the model learns specific examples through training, to \textit{in-context learning}, where the model uses only context exemplars presented at inference time \citep{raventos_pretraining_task_diversity}. We next show that MLPs exhibit a similar transition. Following \citet{raventos_pretraining_task_diversity}, we train each model on a finite distribution with $k$ fixed regression weights. As we increase $k$, we record the MSE obtained by each model on both the finite distribution $\vr{\beta} \sim \mathcal{U}\Px{\Bx{\vr{\beta_i}}_{i=1}^k}$ using the same $\vr{\beta}$'s from training (Figure \ref{fig:icl_tasks}e) and the unrestricted distribution $\vr{\beta} \sim \mathcal{N}(\vr{0}, \vr{I}/n)$ where $\vr{\beta}$'s are sampled anew (Figure \ref{fig:icl_tasks}f). We determine whether a model has learned the in-weight solution by comparing its MSE to that of the discrete minimum mean squared error (dMMSE) estimator, which is a Bayesian estimator derived from a prior matched to the finite training distribution (see Appendix \ref{app:icl_reg} for details).\footnote{The dMMSE estimator averages across the $k$ weights in the finite training distribution based on their fit to the current context exemplars.} We characterize the in-context solution by a Ridge estimator with the Bayes optimal choice of regularization. For small $k$, all models demonstrate in-weight learning by tracing the dMMSE curve. As $k$ increases, we observe a swift transition to the Ridge curve, indicating a transition to in-context learning. The Transformer makes this transition at a somewhat smaller $k$ than the MLP models. We consider additional plots and parameterizations in Appendix \ref{app:mo_figures}.

\subsection{ICL classification}
Following \citet{reddy_mechanistic}, we present in-context classification as follows. The input consists of a sequence of context exemplars $(\vr{x}_1, \vr{y}_1), (\vr{x}_2, \vr{y}_2), \ldots, (\vr{x}_L, \vr{y}_L)$ followed by a query point $\vr{x}_q$, where $\vr{x}_i, \vr{y}_i \in \R^n$. The $\vr{x}$ points are sampled from a Gaussian mixture model $\mathcal{M}_k$ consisting of $k$ components. Each mixture component (i.e. cluster) is labeled by one among $C$ labels, where $k \geq C$, so multiple clusters may map to the same label. Labels are represented in the context by vectors $\vr{\alpha}_1, \vr{\alpha}_2, \ldots \vr{\alpha}_C \in \R^n$. If $\vr{x}_i$ belongs to cluster $j$, then $\vr{y}_i = \vr{\alpha}_j$. The model must predict the correct label for $\vr{x}_q$, and outputs $C$ logits corresponding to the $C$ labels (\textbf{not} a vector of values $\vr{\alpha}$, which are used only to represent labels in the context). Figure \ref{fig:icl_tasks}g illustrates this task, with additional details in Appendix \ref{app:icl_cls}.

Importantly, the query point $\vr{x}_q$ shares a cluster with at least one of the context points $\vr{x}_1, \vr{x}_2, \ldots, \vr{x}_L$. Mixture components and cluster labels remain fixed throughout training. Hence, the model can learn either an in-weight solution by memorizing the label for each cluster, or an in-context solution by referencing the class label associated with $\vr{x}_q$ among the context exemplars. We also consider the input's \textit{burstiness} $B$, which is the number of repeats per cluster in the context ($B$ must divide the context length $L$). For example, $B = 3$ means there are exactly three points from each cluster represented in the inputs. Data diversity corresponds to the number of clusters $k$, where larger $k$ correspond to more diverse dataset. Unless otherwise stated, we use $n = 8$ dimensional inputs.

\paragraph{Results.} We first compare how different architectures perform at ICL across different compute in Figure \ref{fig:icl_tasks}h. The task is parameterized by burstiness $B = 4$ and $k = 2048$ with $L = 8$ points in the context, a setting in which all models develop an in-context solution (see Figure \ref{fig:cls_icl_supp}d for details). As before, with sufficient compute Transformers do not outperform vanilla MLPs or Mixers. Indeed, at small compute, vanilla MLPs attain a somewhat lower loss. Note: in this setting, there are $L / B = 2$ labels present in each context, out of $C = 32$ total possible labels. As a baseline, we plot in gray the loss obtained by placing equal probability on the 2 labels present in-context. MLPs in particular appear to plateau at this baseline before approaching zero loss with higher compute.

We also measure how well each model handles increasingly long context lengths in Figure \ref{fig:icl_tasks}i. In a surprising reversal, vanilla MLPs attain a relatively flat loss across context lengths, as do Transformers. Mixers' loss increases somewhat for longer contexts, though this blip vanishes at higher dimensions (Figure \ref{fig:cls_icl_supp}b). Overall, MLPs continue to perform at a comparable level with Transformers on in-context classification. 

Finally, we observe a transition from IWL to ICL across the three architectures as data diversity increases. As in \citet{reddy_mechanistic}, we measure IWL by constructing test examples where the query point \textit{does not} appear in the context. The only way the model correctly classifies these points is if it memorizes the mapping from cluster to label. To measure ICL, we consider two different strategies: 1) sample points from an entirely different mixture $\mathcal{M}_k'$, producing novel clusters, or 2) swap cluster/label mappings so that clusters are labeled differently than they were during training. Test examples from either strategy can only be solved if the model identifies cluster labels in-context, since the underlying cluster label assignment is different from training.\footnote{In practice, accuracy on these two ICL measures is virtually identical across all models and settings.} We plot accuracy across all three types of test examples in Figure \ref{fig:icl_tasks}j for increasing $k$, and observe a transition from IWL to ICL across all three model architectures. The transition happens for somewhat lower data diversity in Transformers and Mixers, and somewhat higher in vanilla MLPs. Additional plots and parameterizations are explored in Appendix \ref{app:mo_figures}.

\begin{figure}[htb]
    \centering
    \includegraphics[width=\linewidth]{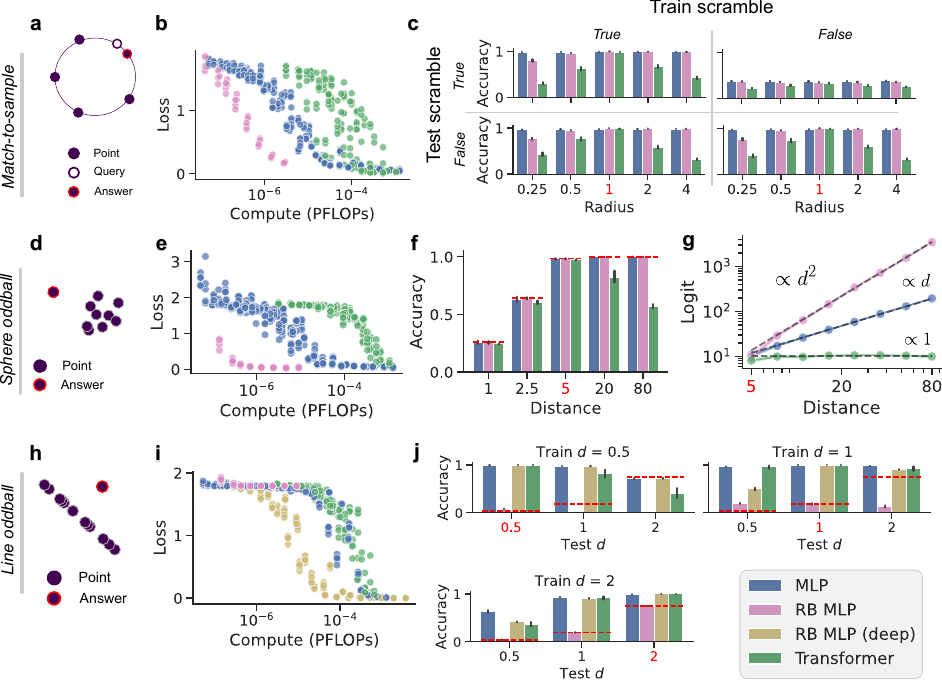}
    \caption{\textbf{Relational reasoning results.} Global legend is at the bottom right. \textbf{(a)} Match-to-sample task. \textbf{(b)} Compute vs. cross entropy loss on MTS task. Each point represents a single model, with particular parameters and training time. RB MLPs attain the best loss with the smallest compute, followed by MLPs and Transformers. \textbf{(c)} OOD generalization on MTS. In-distribution radii are highlighted in red. MLPs and RB MLPs generalize well on OOD radii. No model generalizes well on OOD test scrambling. \textbf{(d)} Sphere oddball task. \textbf{(e)} Same as in (b), for sphere oddball. \textbf{(f)} OOD generalization on sphere oddball. In-distribution distance is highlighted in red. Red dashed lines correspond to the accuracy obtained by guessing that the furthest point away from the cluster center is the oddball. \textbf{(g)} Logit of oddball point as its distance from the center increases. Dashed lines correspond to different polynomial scalings. Only the Transformer fails to increase its logit with distance. \textbf{(h)} Line oddball task. \textbf{(i)} Compute vs. loss on line oddball task. RB MLP no longer learns the task well, but appending additional MLP layers (``RB MLP (deep)") helps. \textbf{(j)} OOD generalization on line oddball. In-distribution distance is highlighted in red. Red lines indicate accuracy attained by a model guessing that the furthest point away from the center is the oddball. MLPs continue to generalize stronger than Transformers, and match the deep RB MLP. \textbf{(all)} Shaded regions and error bars correspond to 95 percent confidence intervals estimated from 5 replications.}
    \label{fig:relational_tasks}
\end{figure}

\section{Experiment: Relational tasks} \label{sec:relational_tasks}
We next consider classical tasks from psychology used to study relational reasoning in humans, animals, and neural networks  \citep{campbell_griffiths_rbm,skinner_match,sable_oddball,geiger_relational_nonsym}. These tasks are functionally a subset of in-context classification, and rely on understanding similarity relationships between context exemplars. 

In a surprising advance from the tasks explored in Section \ref{sec:icl_tasks}, we find that MLPs perform \textit{better} with \textit{less compute} than Transformers, and generalize more effectively on out-of-distribution test sets. As a benchmark for gold-standard performance using hand-crafted features, we implement a relationally bottlenecked MLP (RB MLP), an architecture demonstrated to solve many challenging relational tasks with competitive generalization and efficiency characteristics \citep{webb_rbm_review,campbell_griffiths_rbm}. Relational bottlenecks are architectural components that prevent absolute information about the inputs from propagating downstream; rather, the RB computes a set of (hand-crafted) relations between inputs (often simple dot products), and allows only these relations to flow downstream, forcing the model to operate on abstractions. Our RB MLP operates by first computing dot product relations between inputs, then propagating optionally through several MLP layers before a final readout (see Appendix \ref{app:rb_mlp} for details). We find that while relational bottlenecks are helpful when the model's relations align well with the task structure, they may fail on tasks with deviating structure. All in all, these relational tasks demonstrate that MLPs can quite surprisingly outperform Transformers on certain in-context tasks.

The question of whether neural network models can reason relationally at all has been an enduring topic of heated debate (see \citet{alhama_review_neurosymb} for a recent review). Our results fall decisively in favor of the affirmative, and contrast a recent attempt at formally proving that MLPs cannot reason relationally \citep{boix_transformers_abstract}. In Appendix \ref{app:relational_reasoning}, we discuss our relationship with the relational reasoning literature, comment on the proof by \citet{boix_transformers_abstract}, and demonstrate empirically that a vanilla MLP solves a task posited by their arguments to be impossible.

\subsection{Match to sample}
The match-to-sample (MTS) task paradigm originates in Skinnerian behavioral experiments \citep{skinner_match}. A test subject is first presented with a sample stimulus (e.g. an image). The subject is then shown a set of many stimuli, and must select the one that matches the original sample.

Our MTS task proceeds as follows. The model is presented with $L$ context points $\vr{x_1}, \vr{x_2}, \ldots, \vr{x}_L \in \R^2$ followed by a query point $\vr{x}_q$. The context points are evenly spaced along a circle with unit radius centered at the origin. The model must return the index of the context point closest to the query $y = \argmax_i \Px{\vr{x}_i \cdot \vr{x}_q}$. The points can be thought of as idealized stimulus embeddings in neural representation space. A model must reason correctly about distances between points, and ignore their absolute location (which varies from example to example). Framed this way, the MTS setup is an in-context classification task with one context point per class. In the results that follow, we use $L = 5$ points in the context. Figure \ref{fig:relational_tasks}a presents an illustration of the task, with additional details in Appendix \ref{app:mts}.

In addition to the vanilla MLP and Transformer models, we also consider a relationally bottlenecked MLP architecture (RB MLP) \citep{webb_rbm_review}. The RB MLP uses dot-product relations $\vr{r}$ between the query point and each of the five context points $\vr{r} = \Px{\vr{x}_q \cdot \vr{x}_1, \vr{x}_q \cdot \vr{x}_2, \ldots, \vr{x}_q \cdot \vr{x}_L}$. The relations $\vr{r}$ are passed directly to a softmax readout, producing class predictions $\vr{y}_{\text{RB}} = \mathsf{smax}\Px{\vr{W}_{\text{readout}}\, \vr{r}}$. Note, a choice of weights $\vr{W}_{\text{readout}} = \vr{I}$ solves the task perfectly, though it remains to be seen whether the RB model discovers this solution. Further details on the RB MLP model are in Appendix \ref{app:rb_mlp}.

\paragraph{Results.} Figure \ref{fig:relational_tasks}b plots the loss obtained by each of the three models on the MTS task as a function of compute. The vanilla MLP outperforms the Transformer by a surprising margin. With relations that are well-aligned to the task, the RB MLP model achieves the best compute efficiency.

We also consider how well each model performs in different kinds of out-of-distribution test examples. Results are plotted in Figure \ref{fig:relational_tasks}c. We first try perturbing the radius of the circle after training with unit radius. As we vary the radius during testing, both MLP models continue to perform well, but the Transformer quickly degenerates. We also try changing the order of context points. If the points are \textit{ordered}, they are presented in clockwise order along the circle. If the points are \textit{scrambled}, they are presented in random order. Curiously, if the models are trained first on ordered points, then no model generalizes well when subsequently tested with scrambled points (not even the relationally bottlenecked model).

\subsection{Sphere oddball}
The oddball tasks described in the next two sections are based on work from \citet{sable_oddball}, who used it to measure geometric relational reasoning in humans and baboons. In an oddball task, the test subject is presented with six stimuli, originally geometric images. One image differs from the others. The subject must select this ``oddball'' to succeed. Like the MTS task, the oddball tasks are a subset of ICL classification where all-but-one point belong to the same class.

As before, our version of the oddball task simplify full visual stimuli into idealized stimulus representations. The model is presented with $L$ context points $\vr{x}_1, \vr{x}_2, \ldots, \vr{x}_L \in \R^2$. (There are no query points.) In the \textit{sphere oddball} task, the context points are sampled as $\vr{x} \sim \mathcal{N}(\vr{\mu}, \vr{I})$, for some nonzero center $\vr{\mu}$. One point in the context is randomly selected and perturbed in a random direction by a distance $d$. The model must return the index of this oddball point. In the results that follow, we use $L = 6$ points in the context. Figure \ref{fig:relational_tasks}d presents an illustration of the task, with additional details in Appendix \ref{app:sphere_odd}.

In addition to the vanilla MLP and Transformer models, we again use an RB MLP with dot-product relations. Given the average context point $\overline{\vr{x}} = \frac{1}{L} \sum_i \vr{x}_i$, the relations $\vr{R}$ form a matrix with entries $R_{ij} = (\vr{x}_i - \overline{\vr{x}}) \cdot (\vr{x}_j - \overline{\vr{x}})$. These centered\footnote{Centering was not required in the MTS task above, since the context points in that task were already centered.} dot-product relations are flattened and passed directly to a softmax readout, forming class predictions $\vr{y}_{\text{RB}} = \mathsf{smax} (\vr{W}_{\text{readout}} \,\mathsf{flat}(\vr{R}))$. Note, the sphere oddball task can be solved by finding the point that is furthest away from the cluster center. Hence, a choice of $\vr{W}_\text{readout}$ that selects the diagonal relations $R_{ii}$ will solve the task, but it remains to be seen whether the model will learn such a readout. Additional details on the RB MLP are provided in Appendix \ref{app:rb_mlp}.

\paragraph{Results.} Figure \ref{fig:relational_tasks}e plots the loss obtained by each model on the sphere oddball task as a function of compute. Again, the vanilla MLP outperforms the Transformer by a wide margin. With well-suited relations, the RB MP achieves the best compute efficiency.

We also consider how each model performs on OOD test examples. Training examples consist of oddballs with fixed perturbation distance $d = 5$. As we vary towards longer distances, we observe in Figure \ref{fig:relational_tasks}f that both the vanilla and RB MLPs continue performing perfectly, while the Transformer's performance decays. We can also examine how the logit corresponding to the oddball index changes as we change the position of the oddball with respect to the cluster center (Figure \ref{fig:relational_tasks}g). Both the vanilla and RB MLPs learn strictly increasing relationships, suggesting they will correctly generalize to any $d$ provided the other logits do not also increase. The Transformer seems to asymptote to a flat value, suggesting that it ultimately fails to distinguish the oddball logit for large $d$.

\subsection{Line oddball}
Rather than sample context points from a spherical Gaussian, the \textit{line oddball} task distributes context points along a line. For each training example, we select a line with random orientation that passes through the origin. Context points $\vr{x}_1, \vr{x}_2, \ldots, \vr{x}_L \in \R^2$ are Gaussian distributed along this line with zero mean and unit variance. One context point is selected at random to be the oddball, and is perturbed by a distance $d$ in the direction perpendicular to the line. The model must output the index of the oddball point. We use $L = 6$ points in the context. Figure \ref{fig:relational_tasks}h presents an illustration of the task, with additional details in Appendix \ref{app:model_task_details}. We use the same models as for the spherical oddball, including an RB MLP using the same relations $\vr{R}$.

The line oddball task cannot be solved by simply selecting the point furthest away from all the others for small $d$. The relevant relations are more sophisticated, and must be sensitive to the linear structure between context points. The line oddball task also presents an alternative hypothesis for the structure of stimuli in representation space. Whereas the sphere oddball presumes that input stimuli are distributed isotropically in representation space, the line oddball task assumes that inputs fall along a favored direction. Neither is obviously more plausible than the other for a general task. However, as we see below, the RB MLP from the past two tasks fails quickly on this task, presenting a simple example in which well-aligned relations can be critical. We also experiment with a ``deep" RB MLP, which features two additional MLP layers between the relations and readout, and now solves the task at a small compute cost.

\paragraph{Results.} Figure \ref{fig:relational_tasks}i plots the loss for each model as a function of compute. Vanilla MLPs perform just a little better than Transformers. A (shallow) RB MLP fails to solve the task altogether, and loss remains high. A deep RB MLP, which features two additional fully-connected layers after the relational bottleneck, can solve the task.

We also compare how each model performs on different out-of-distribution test examples in Figure \ref{fig:relational_tasks}j. We vary the distance $d$ between the oddball point and the line of context points on different training sets. At test time, we compare the accuracy of each model on the whole range of $d$. As we saw above, MLPs continue to outperform Transformers on almost all OOD test cases. Unless equipped with further layers, the shallow RB MLPs fail to learn the task at all for small $d$. Though a relationally bottlenecked model can succeed with great efficiency on well-aligned tasks, without relations that are tailored to the task's underlying structure, an RB model may be disadvantaged.

\section{Discussion} \label{sec:discussion}
We observed that MLPs can learn in-context and moreover, perform at a level comparable to Transformers on in-context tasks.
We also examined relational reasoning tasks, closely related to ICL classification, which have historically been considered beyond the ability of simple neural architectures like MLPs \citep{alhama_review_neurosymb,marcus_elim_connect,boix_transformers_abstract}. Surprisingly, MLPs learn these relational tasks well --- and exhibit both better compute efficiency and generalization performance than Transformers. This observation diverges from earlier claims \citep{boix_transformers_abstract,webb_rbm_review}, but is consistent with the emerging realization that, given sufficient data diversity and compute, an MLP can indeed learn to reason relationally \citep{geiger_relational_nonsym}. We discuss our relationship with prior work in relational reasoning further in Appendix \ref{app:relational_reasoning}.

Broadly, our work is consistent with the ``bitter lesson'' of AI research \citep{bitter_lesson}: in the face of increasing compute and data resources, general methods with weaker inductive bias will outperform specialist methods endowed with stronger inductive bias. This heuristic speaks to the intuition that a strong inductive bias may be beneficial for particular tasks, but may misalign the model in different or more general settings. We see an extreme example of this in studying relationally bottlenecked MLPs, where hand-crafted relations strongly benefit the model in specific cases where they align with the task. However, departing even slightly from the ideal task structure prevents the shallow RB MLP from learning the task at all, while a vanilla MLP continues to exhibit strong performance. In the absence of hand-designed relations, Transformers are more general learners than RB MLPs, but less than vanilla MLPs. As a result, for certain well-suited tasks (like ICL regression), Transformers perform more efficiently for a fixed compute budget. But for other tasks (relational reasoning, simple regression and classification in Appendix \ref{sec:simple_tasks}), MLPs have the upper hand.

These results expand the range of possible tasks commonly thought solvable by MLP models. ICL may not be the exclusive domain of Transformers, and we encourage greater engagement with ICL beyond attention-based architectures. The surprising success of MLPs for relational reasoning also encourages a change in perspective about how simple architectures may be capable of solving relational tasks, and under what conditions they fail. The impressive performance of MLPs hints at potential real-world benefits, and we watch the future outcomes of all-MLP approaches with interest.

\paragraph{Limitations and future directions.} We consider only controlled, synthetic tasks designed to probe for specific characteristics. We never compare architectures on complex datasets like those found in language or vision, though there are other studies that do, and find that MLPs continue to perform competitively \citep{mlp_mixer,liu_g_mlp,fusco_pnlp}. The advantage of our approach is that we avoid confounds irrelevant to ICL introduced by complex data, and clarify the specific competencies of each model to learn in-context across representative settings. It remains overall unclear how MLP-based architectures perform against Transformers at scale in complex, real-world tasks, and where a possible performance gap may originate. Future work should explore MLPs in more naturalistic settings to better understand the potential comparative advantage of attention-based schemes.

We also work exclusively in an online setting where models have access to a continuous stream of infinite data. As the bitter lesson heuristic predicts, this setup benefits the MLP, but we can certainly imagine that in data-limited scenarios, Transformers and other architectures with stronger inductive bias would dominate. Indeed, we have already observed that Transformers tend to learn in-context with comparatively less data diversity. Examining a data-limited setting represents another important future direction, and will potentially reveal important weaknesses in MLPs.

Where possible, we test on a wide array of parameterizations and task settings. The main text figures represent only an illustrative subset of our total results, with supplementary figures provided in Appendix \ref{app:mo_figures}. However, as with any empirical study, we cannot test every infinite variation on our models and tasks; there may be further unexpected results hiding behind a setting we have not tried.

Overall, we hope our results encourage further work studying ICL beyond attention-based architectures, and the properties of simple architectures like MLPs that enable them to solve relational tasks. Important questions remain in quantifying how much data diversity is generally required to transition to ICL, how this threshold depends on architecture, varying sensitivity to context length across architectures, precisely characterizing differences in inductive bias for ICL, and more.

\paragraph{Ethics statement.} Since this study focuses on synthetic tasks, it is limited in direct negative societal impacts beyond that of general theoretical machine learning research. We do not conduct experiments in human or animal subjects.

\paragraph{Reproducibility statement.} Full descriptions of all tasks are provided in Sections \ref{sec:icl_tasks} and \ref{sec:relational_tasks} of the main text. Additional details regarding specific task and model configurations, along with code availability, compute requirements, and software, are comprehensively enumerated in Appendix \ref{app:model_task_details}.

\paragraph{Acknowledgements.} We thank Alex Atanasov, Hamza Chaudhry, Alex Meterez, Mo Osman, Sab Sainathan, Jacob Zavatone-Veth, members of the Pehlevan Group, and the anonymous ICLR reviewers for many helpful comments and discussions on our manuscript. WLT is supported by a Kempner Graduate Fellowship. CP is supported by NSF grant DMS-2134157, NSF CAREER Award IIS-2239780, DARPA grant DIAL-FP-038, a Sloan Research Fellowship, and The William F. Milton Fund from Harvard University. This work has been made possible in part by a gift from the Chan Zuckerberg Initiative Foundation to establish the Kempner Institute for the Study of Natural and Artificial Intelligence. The computations in this paper were run on the FASRC cluster supported by the FAS Division of Science Research Computing Group at Harvard University.

\bibliographystyle{unsrtnat}
\bibliography{ref}

\newpage

\appendix

\section{MLPs reason relationally} \label{app:relational_reasoning}
Relational reasoning is closely related to in-context learning, as solving ICL tasks requires reasoning about the relationships between inputs while ignoring their absolute characteristics. Indeed, the relational tasks we explore in the main text are functional subsets of ICL classification. At the same time, MLPs are commonly presumed to \textit{fail} at relational reasoning \citep{marcus_elim_connect,boix_transformers_abstract} or exhibit severe weaknesses \citep{fodor_pylyshyn}. While our primary focus remains on comparing in-context learning between Transformers and MLPs, we offer this digression to contextualize our results within the broader relational reasoning literature.

The question of whether connectionist models are able to reason relationally at all has been an enduring topic of passionate debate (see \citet{alhama_review_neurosymb} for a recent review). Our empirics support the notion that classical architectures like vanilla MLPs \textit{can indeed reason relationally}, consistent with recent findings in \citet{geiger_relational_nonsym}. However, many researchers presuppose that classical architectures cannot solve relational tasks, resulting in a zoo of alternatives aimed at endowing neural networks with relational capacities \citep{webb_rbm_review,battaglia_relational_gnn,geiger_relational_nonsym,alhama_review_neurosymb}. 

One especially strong claim that MLPs cannot reason relationally was advanced by \citet{boix_transformers_abstract}, who formally prove that MLPs will never generalize to unseen symbols on relational tasks. Their proof however relies on a pathological input scheme that hinders learning. Below, we discuss their analysis on MLPs and offer our own remarks on the learnability of relational tasks. We also demonstrate empirically that, under conventional settings, MLPs \textit{do} generalize on a relational task claimed by \citet{boix_transformers_abstract} to be impossible.

\subsection{Summary of Boix-Adsera et al.} \label{app:boix_comments}
We begin with a brief summary of the relevant theorem in \citet{boix_transformers_abstract}. Consider a \textit{template} $\vr{z}$ consisting of a sequence of \textit{wildcards}
\[\vr{z} = \alpha_1 \alpha_2 \ldots \alpha_k \in \mathcal{W}^k\,. \]
A \textit{string} $\vr{x}$ consisting of \textit{symbols} $ \vr{x} = x_1 x_2 \ldots x_k \in \mathcal{X}^k$ satisfies $\vr{z}$ if there exists an injective map $s: \mathcal{W} \rightarrow \mathcal{X}$ such that $s(\alpha_i) = x_i$ for all $i$, which we call a \textit{substitution map}. Finally, we have a labeling function $f^*: \mathcal{W}^k \rightarrow \R$ that assigns scalar labels to different templates.

A concrete example of this setup is the same-different task often used in probing relational reasoning \citep{geiger_relational_nonsym}. Here, we consider two templates $z_1 = \alpha \alpha$ and $z_2 = \alpha \beta$. The labeling function is $f^*(z_1) = 1$ and $f^*(z_2) = 0$. The string $\vr{x} = AA$ matches template $z_1$, and the string $\vr{x}' = AB$ matches template $z_2$. We will abuse notation slightly and write $f^*(\vr{x}) = f^*(\vr{z}_1) = 1$. Crucially, matching a template depends only on relations between symbols and not on the symbols themselves. For example,
\[f^*(CC) = f^*(88) = f^*([\text{platypus}] [\text{platypus}]) = 1 \neq f^*([\text{platypus}][\text{kangaroo}])\]
regardless of the concrete meaning of the symbols.

\citet{boix_transformers_abstract} consider MLP models with one-hot encodings of symbols 
\[\vr{E}(\vr{x}) = (\vr{e}_{x_1}, \vr{e}_{x_2}, \ldots, \vr{e}_{x_k})^\intercal\,, \quad \vr{E} (\vr{x}) \in \R^{k \times |\mathcal{X}|}\]
where $\vr{e}_{x_i} \in \R^{|\mathcal{X}|}$ is a vector with 1 at an index corresponding to $x_i$ and 0 elsewhere. The one-hot encodings are then flattened as $\mathsf{flat}(\vr{E}(\vr{x})) \in \R^{k |\mathcal{X}|}$ before being passed directly into an MLP.

For notation, we write $f_{\text{MLP}}(\vr{x}; \vr{\theta}^t)$ as an MLP applied to string $\vr{x}$ with parameters $\vr{\theta}^t$ obtained after $t$ steps of stochastic gradient descent (SGD). We denote $\mathcal{X}_{uns}$ as symbols that are unseen during training, and $\mathcal{X}_{seen}$ as symbols that are seen during training. The theorem is stated as follows

\begin{theorem}[From Boix-Adsera et al., failure of MLPs at generalizing on unseen symbols]
\label{th:boix}
    Suppose the label function $f^*$ is non-constant. Then for all SGD steps $t$, there exists a template $\vr{z} \in \mathcal{W}^k$ and a string $\vr{x}$ consisting of symbols $x_1 x_2 \ldots x_k \in \mathcal{X}_{uns}^k$ which satisfy $\vr{z}$ such that
    \[\E_{\vr{\theta}^t}\Hx{\Px{f_{\text{MLP}} (\vr{x}; \vr{\theta}^t) - f^*(\vr{z})}^2} \geq c > 0\]
    where $c$ is a constant that depends only on $f^*$, and the expectation is taken over random initialization of parameters $\vr{\theta}$ and subsequent SGD steps.
\end{theorem}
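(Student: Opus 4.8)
The plan is a symmetry (exchangeability) argument: on a string built entirely from symbols in $\mathcal{X}_{uns}$, the MLP's output depends on $\vr{\theta}^t$ only through first-layer parameters that are \emph{never touched} by training, and these are distributed symmetrically enough that the output law cannot encode the relational structure of the string. Concretely, since the input enters only the first layer and the encoding is a flattened one-hot, the first-layer pre-activation on a string $\vr{x} = x_1 \ldots x_k$ equals $\sum_{i=1}^{k} \vr{w}^{(i)}_{x_i} + \vr{b}_1$, where $\vr{w}^{(i)}_{x}$ is the block of first-layer weights attached to ``symbol $x$ at position $i$''. The gradient of any per-example loss with respect to $\vr{w}^{(i)}_{x}$ is supported on examples in which $x$ occurs at position $i$; for $x \in \mathcal{X}_{uns}$ this never happens, so every such column stays frozen at its initialization across all $t$ SGD steps. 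Moreover the entire training trajectory, hence all trained parameters $\vr{\psi}^t$ (including $\vr{b}_1$ and everything downstream), is a measurable function of the initialization of the remaining parameters and the minibatch randomness, and is therefore probabilistically independent of these frozen columns.

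Next I would package this into the statement that the output law is blind to symbol arrangement on all-unseen strings. Write $f_{\text{MLP}}(\vr{x}; \vr{\theta}^t) = g\!\left(\sum_{i=1}^{k} \vr{w}^{(i)}_{x_i};\, \vr{\psi}^t\right)$ for the fixed map $g$ given by the rest of the network. For any $\vr{x} \in \mathcal{X}_{uns}^k$ the $k$ columns $\vr{w}^{(1)}_{x_1}, \ldots, \vr{w}^{(k)}_{x_k}$ are indexed by \emph{distinct} positions, so under the standard i.i.d.\ initialization they are a family of i.i.d.\ vectors whose joint law does not depend on which (possibly repeated) unseen symbols occur; combined with their independence from $\vr{\psi}^t$ this gives $f_{\text{MLP}}(\vr{x}; \vr{\theta}^t) \overset{d}{=} f_{\text{MLP}}(\vr{x}'; \vr{\theta}^t)$ for all $\vr{x}, \vr{x}' \in \mathcal{X}_{uns}^k$. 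In particular $\bar f_t := \E_{\vr{\theta}^t}[f_{\text{MLP}}(\vr{x}; \vr{\theta}^t)]$ is one and the same real number for every all-unseen string.

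To conclude, use non-constancy of $f^*$ to pick templates $\vr{z}_1, \vr{z}_2$ with $f^*(\vr{z}_1) \neq f^*(\vr{z}_2)$ and realize each by a string over $\mathcal{X}_{uns}$ (possible whenever $|\mathcal{X}_{uns}| \geq k$, since a template uses at most $k$ distinct wildcards). By Jensen, $\E_{\vr{\theta}^t}[(f_{\text{MLP}}(\vr{x}^{(j)}; \vr{\theta}^t) - f^*(\vr{z}_j))^2] \geq (\bar f_t - f^*(\vr{z}_j))^2$ for $j = 1, 2$, and a single real $\bar f_t$ cannot lie within $\tfrac12 |f^*(\vr{z}_1) - f^*(\vr{z}_2)|$ of both targets, so one of the two expectations is at least $c := \tfrac14 |f^*(\vr{z}_1) - f^*(\vr{z}_2)|^2$, which depends only on $f^*$. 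Choosing the corresponding $\vr{z}$ and satisfying string $\vr{x}$ gives the claim; the witnesses are allowed to depend on $t$ (through the sign of $\bar f_t - f^*(\vr{z}_j)$), which the statement permits.

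The step I expect to be the main obstacle is the probabilistic bookkeeping behind the first two paragraphs: one must verify that the frozen first-layer columns are genuinely independent of the trained parameters \emph{after} $t$ steps of SGD, not merely at initialization, which rests on the fact that neither the forward nor the backward pass on any training example ever reads or writes those columns. One must also check that mild modeling choices (weight decay, first-layer bias updates, minibatch sampling) preserve the ``frozen and exchangeable'' structure---e.g.\ with weight decay the frozen columns merely shrink uniformly, so $\bar f_t$ changes but remains a single number. A secondary point worth stating explicitly is the cardinality assumption $|\mathcal{X}_{uns}| \geq k$ needed to instantiate the two templates over unseen symbols.
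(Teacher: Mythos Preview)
Your proposal is correct. It reaches the same pivotal equality as the paper's summary of Boix-Adsera et al., namely $\E_{\vr{\theta}^t}[f_{\text{MLP}}(\vr{x}^{(1)};\vr{\theta}^t)] = \E_{\vr{\theta}^t}[f_{\text{MLP}}(\vr{x}^{(2)};\vr{\theta}^t)]$ for all $\vr{x}^{(1)},\vr{x}^{(2)}\in\mathcal{X}_{uns}^k$, and your Jensen-plus-pigeonhole finish with $c=\tfrac14|f^*(\vr{z}_1)-f^*(\vr{z}_2)|^2$ is exactly the irreducible gap the paper alludes to.

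The framing differs. The paper invokes the permutation invariance of MLPs and SGD as a black box: one exhibits a permutation $\Pi$ on the $k|\mathcal{X}|$ coordinates of the flattened one-hot that sends $\mathsf{flat}(\vr{E}(\vr{x}^{(1)}))$ to $\mathsf{flat}(\vr{E}(\vr{x}^{(2)}))$ while fixing every seen string, and then appeals to the invariance of the entire SGD trajectory under such input-coordinate permutations. You instead unpack the mechanism directly: the first-layer blocks $\vr{w}^{(i)}_x$ with $x\in\mathcal{X}_{uns}$ receive zero gradient at every step, remain frozen at their i.i.d.\ initialization, and are probabilistically independent of the trained parameters $\vr{\psi}^t$; since the $k$ blocks appearing in any all-unseen string are indexed by distinct positions, they are always $k$ i.i.d.\ draws regardless of symbol repetition. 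Your route is more elementary (no external invariance lemma) and makes the robustness checks you flag---weight decay merely scales the frozen columns uniformly, bias updates sit in $\vr{\psi}^t$---immediate. The permutation-invariance route is terser and requires only permutation-symmetric (not necessarily i.i.d.) initialization. At heart they are the same symmetry argument; your version simply opens the box.
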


Their proof relies on the permutation invariance property of MLPs and SGD \citep{ng_invariance}. Summarizing briefly, they argue that if $x_1, x_2 \in \mathcal{X}_{uns}$, we can swap their one-hot encodings without any impact on the MLP. More generally, we can construct a permutation matrix $\Pi \in \R^{k |\mathcal{X}| \times k |\mathcal{X}|}$ such that for all strings of symbols $\vr{x}^{(1)}, \vr{x}^{(2)} \in \mathcal{X}_{uns}^k$ and $\vr{x}' \in \mathcal{X}_{seen}^k$, it remains true that $\Pi \,\mathsf{flat}(\vr{E}(\vr{x}^{(1)})) = \mathsf{flat}(\vr{E}(\vr{x}^{(2)}))$ and $\Pi \,\mathsf{flat}(\vr{E} (\vr{x}')) = \mathsf{flat}(\vr{E} (\vr{x}'))$. That is, we permute only the indices of unseen symbols, but leave the indices of seen symbols untouched. Then given the permutation symmetry of MLPs and SGD \citep{ng_invariance}, because we preserve the indices of the seen symbols, we must have that
\[\E_{\vr{\theta}^t} f_{\text{MLP}}\Px{\vr{x}^{(1)}; \vr{\theta}^t} = \E_{\vr{\theta}^t} f_{\text{MLP}}\Px{\vr{x}^{(2)}; \vr{\theta}^t}\,.\]
Hence, if $f^*(\vr{x}^{(1)}) \neq f^*(\vr{x}^{(2)})$, the MLP cannot approach arbitrarily close to both labels, incurring an irreducible cost $c > 0$ that depends on the difference. In this way, MLPs cannot generalize to unseen symbols on any relational task.

\subsection{A different input scheme}

Is there a way to circumvent this impossibility result? One aspect of the proof that may seem suspect is its reliance on flattening one-hot encodings $\mathsf{flat} (\vr{E}(\vr{x}))$ as direct input to the MLP. Going as far back as Word2vec \citep{word2vec}, a well-established convention for processing one-hot inputs is to instead pass them through an embedding matrix $\vr{W}_e$, creating vector embeddings
\[\vr{h}_0 (\vr{x}) = (\vr{W}_e \vr{e}_{x_1}, \vr{W}_e \vr{e}_{x_2}, \ldots, \vr{W}_e \vr{e}_{x_k})^\intercal\,, \quad \vr{h}_0^\pi (\vr{x}) \in \R^{k \times d} \,\] 
where $d$ is the dimension of a single symbol's vector embedding.\footnote{Indeed, in their results on Transformers, Boix-Adsera et al. do use vector embeddings. It is unusual they would choose to omit them in their analysis of MLPs.} A practitioner then flattens and operates on the resulting vector embeddings, \textit{not} the one-hot encodings directly. As we will shortly see, if we consider the more conventional input scheme that uses vector embeddings $\vr{h}_0(\vr{x})$ and \textit{not} the one-hot encodings directly, then the conclusion from \citet{boix_transformers_abstract} no longer holds.

In particular, we consider an architecture where the input to the MLP is $\mathsf{flat}(\vr{h}_0 (\vr{x}))$, rather than $\mathsf{flat}(\vr{E}(\vr{x}))$. We can attempt the same logic as before, and identify a permutation $\Pi$ such that for all strings of symbols $\vr{x}^{(1)}, \vr{x}^{(2)} \in \mathcal{X}_{uns}^k$ and $\vr{x}' \in \mathcal{X}_{seen}^k$, we have that $\Pi \,\mathsf{flat}(\vr{h}_0(\vr{x}^{(1)})) = \mathsf{flat}(\vr{h}_0(\vr{x}^{(2)}))$ and $\Pi \,\mathsf{flat}(\vr{h}_0 (\vr{x}')) = \mathsf{flat}(\vr{h}_0 (\vr{x}'))$. Unfortunately, if the embedding matrix $\vr{W}_e$ is randomly initialized like most neural network parameters, it is virtually impossible to find a permutation where $\Pi \,\mathsf{flat}(\vr{h}_0(\vr{x}^{(1)})) = \mathsf{flat}(\vr{h}_0(\vr{x}^{(2)}))$ while $\vr{x}^{(1)} \neq \vr{x}^{(2)}$. This is because the probability that any two elements of $\vr{W}_{e}$ are identical is zero for typical random matrix ensembles used in practice, e.g. if the elements of $\vr{W}_{e}$ are sampled i.i.d from a normal distribution. 

Hence, it is clear that the original proof strategy of permuting the input, now $\mathsf{flat}(\vr{h}_0 (\vr{x}))$, has become unviable. However, a skeptical reader might now wonder whether Theorem \ref{th:boix} might still be saved if we apply permutations to the one-hot encodings \textit{before} they are passed to the embedding matrix. That is, given a permutation matrix $\Pi \in \R^{|\mathcal{X}| \times |\mathcal{X}|}$, we construct
\[\vr{h}_0^\pi(\vr{x}) = (\vr{W}_e (\Pi \vr{e}_{x_1}), \vr{W}_e (\Pi \vr{e}_{x_2}), \ldots, \vr{W}_e (\Pi \vr{e}_{x_k}))^\intercal\,.\]
In this way, Theorem \ref{th:boix} might still be rescued through permutations on one-hots before the embedding matrix. This method sidesteps the issue with permuting $\mathsf{flat}(\vr{h}_0 (\vr{x}))$ directly, and the MLP trained on SGD remains invariant to any permutation on the underlying one-hots. Hence, it seems the proof may remain valid, and the impossibility result might still holds.

Unfortunately, this scheme runs into a different issue: it is impossible to find two inputs $\vr{x}^{(1)}, \vr{x}^{(2)}$ where $\Pi \vr{x}^{(1)} = \vr{x}^{(2)}$, but that $f^*(\vr{x}^{(1)}) \neq f^*(\vr{x}^{(2)})$. (Note, we have abused notation slightly and write $\Pi \vr{e}_{x} = \Pi \vr{x}$.) Indeed, we next show that if $\vr{x}$ satisfies a template $\vr{z}$, then any permutation $\Pi$ on the symbols of $\vr{x}$ will also satisfy $\vr{z}$. This can be seen quite simply by considering that 1) by definition, template satisfaction is invariant under a relabeling of symbols and 2) any permutation is a relabeling of symbols --- hence, template satisfaction must be invariant under permutation. We phrase this formally below.

\begin{proposition}[Permutation invariance of template satisfaction]
\label{prop:permutation_invar}
   For any template $\vr{z} \in \mathcal{W}^k$ and any permutation $\Pi: \mathcal{X} \rightarrow \mathcal{X}$, if the string $\vr{x}$ satisfies $\vr{z}$, then $\Pi \vr{x}$ also satisfies $\vr{z}$. 
\end{proposition}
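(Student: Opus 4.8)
The plan is to unwind the definition of template satisfaction and exhibit the required substitution map explicitly. By hypothesis, the string $\vr{x} = x_1 x_2 \ldots x_k$ satisfies the template $\vr{z} = \alpha_1 \alpha_2 \ldots \alpha_k$, so there is an injective substitution map $s : \mathcal{W} \rightarrow \mathcal{X}$ with $s(\alpha_i) = x_i$ for all $i$. Writing $\Pi \vr{x}$ for the string obtained by applying $\Pi$ coordinatewise, i.e.\ $(\Pi \vr{x})_i = \Pi(x_i)$, I would propose the composite map $s' := \Pi \circ s : \mathcal{W} \rightarrow \mathcal{X}$ as the witness that $\Pi \vr{x}$ satisfies $\vr{z}$.

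Two things then need checking. First, $s'$ is injective: it is the composition of the injective map $s$ with the permutation $\Pi$, and a permutation of $\mathcal{X}$ is a bijection, hence injective, and a composition of injective maps is injective. Second, $s'$ reproduces $\Pi \vr{x}$ from $\vr{z}$: for each $i$ we have $s'(\alpha_i) = \Pi(s(\alpha_i)) = \Pi(x_i) = (\Pi \vr{x})_i$. Together these two facts are precisely the definition of $\Pi \vr{x}$ satisfying $\vr{z}$, which completes the argument.

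There is essentially no obstacle here — the statement is the formal shadow of the informal remark that template satisfaction depends only on relations among symbols, and a permutation is merely a relabeling of symbols. The one point worth a comment is the role of the hypothesis: injectivity of $s'$ uses only that $\Pi$ is injective, but stating the hypothesis with $\Pi$ a full permutation is natural because it makes the statement symmetric (applying $\Pi^{-1}$ gives the converse). Finally, I would remark that, since $f^*$ is defined on templates rather than on strings, this proposition yields $f^*(\vr{x}) = f^*(\Pi \vr{x})$ whenever $\vr{x}$ satisfies a template, which is exactly why the ``permute the one-hots before the embedding matrix'' attempt cannot rescue Theorem \ref{th:boix}: any permutation invisible to the architecture is also invisible to the labels.
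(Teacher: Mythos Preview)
Your proof is correct and follows essentially the same approach as the paper: both exhibit the witness $s' = \Pi \circ s$ (the paper merely asserts its existence, while you construct it explicitly and verify injectivity). Your version is in fact slightly more detailed and adds the useful closing remark about why this blocks the rescue of Theorem~\ref{th:boix}.
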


\begin{proof}
    If symbols $\vr{x} = x_1 x_2 \ldots x_k$ satisfy the template $\vr{z} = \alpha_1 \alpha_2 \ldots \alpha_k$, then there exists an injective substitution map $s$ such that $s(\alpha_i) = x_i$. Because permutations $\Pi$ are bijective, there must also exist an injective substitution map $s'$ such that $s'(\alpha_i) = \Pi(x_i)$. Hence, $\Pi \vr{x}$ satisfies the template $\vr{z}$.

\end{proof}

In this way, it is not actually possible to find two strings $\vr{x}^{(1)}, \vr{x}^{(2)}$ such that $\Pi \vr{x}^{(1)} = \vr{x}^{(2)}$ but for which $f^*(\vr{x}^{(1)}) \neq f^*(\vr{x}^{(2)})$ since they both satisfy the same template. Permuting over one-hot encodings before the embedding matrix is not viable. Alternatively, we could try permuting the output from an intermediate layer of the MLP, but this will fail for the same reason that permuting $\mathsf{flat}(\vr{h}_0 (\vr{x}))$ failed. All in all, if we replace the input $\mathsf{flat}(\vr{E} (\vr{x}))$ with the more conventional $\mathsf{flat}(\vr{h}_0 (\vr{x}))$, Theorem \ref{th:boix} is no longer valid.

\subsection{Can MLPs reason relationally?}

We have argued that the impossibility theorem of \citet{boix_transformers_abstract} can be circumvented, but it remains to be seen whether MLPs can truly reason relationally. We next identify coarse conditions that would in principle allow an MLP to generalize to unseen symbols given finite training data. Intuitively, we can imagine that if the MLP's training data is sufficiently diverse and the model is sufficiently expressive and smooth, then any unseen input $\vr{x}$ will fall somewhat close to a seen input $\vr{x}'$, so $f_{\text{MLP}} (\vr{x}) \approx f_{\text{MLP}}(\vr{x}') \approx f^*(\vr{x}')$. If $\vr{x}$ and $\vr{x}'$ are labeled the same (not unreasonable, if they are close), then the MLP would indeed be generalizing on an unseen input example.

We formalize this intuition in the following proposition, which establishes coarse conditions for a model $f$ to generalize on unseen input. We adopt the same setting as above, but we now treat strings $\vr{x}$ as real-valued $\vr{x} \in \R^n$. This is equivalent to flattening the vector embeddings $\vr{h}_0(\vr{x})$ generated from one-hot encoded symbols $x_1 x_2 \ldots x_k$. Doing so simplifies the following discussion.

\begin{proposition}[Conditions for generalizing to unseen inputs]
\label{prop:conditions}
Fix $\varepsilon$ and select $\delta < \varepsilon/3$. Given a model $f: \R^n \rightarrow \R$ and labeling function $f^*: \R^n \rightarrow \R$, if they satisfy the following three conditions
\begin{enumerate}
    \item \textbf{Smoothness}: $f$ and $f^*$ are $L$-Lipschitz continuous
    \item \textbf{Expressivity}: for all $\vr{x}$ that are seen,  $\Nx{f(\vr{x}) - f^*(\vr{x})} < \delta$.
    \item \textbf{Data diversity}: for all $\vr{x}'$ that are unseen, there exists an $\vr{x}$ that is seen such that $\Nx{\Nx{\vr{x} - \vr{x}'}} < \delta/L$
\end{enumerate}
then
\[\Nx{f(\vr{x}) - f^*(\vr{x})} < \varepsilon\]
for all $\vr{x}$ (seen and unseen).
\end{proposition}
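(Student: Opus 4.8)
The plan is to reduce the claim to a single three-term triangle inequality, applied separately to the two exhaustive cases ``$\vr{x}$ seen'' and ``$\vr{x}$ unseen.'' The seen case is immediate: condition 2 (Expressivity) gives $\Nx{f(\vr{x}) - f^*(\vr{x})} < \delta$, and since $\delta < \varepsilon/3 < \varepsilon$ we are done. So the real content is the unseen case, and even there the argument is short.

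For an unseen $\vr{x}$, I would first invoke condition 3 (Data diversity) to produce a seen point $\vr{x}'$ with $\Nx{\Nx{\vr{x} - \vr{x}'}} < \delta/L$. Then insert $f(\vr{x}')$ and $f^*(\vr{x}')$ and split with the triangle inequality:
\[
\Nx{f(\vr{x}) - f^*(\vr{x})} \;\leq\; \Nx{f(\vr{x}) - f(\vr{x}')} \;+\; \Nx{f(\vr{x}') - f^*(\vr{x}')} \;+\; \Nx{f^*(\vr{x}') - f^*(\vr{x})}\,.
\]
The middle term is bounded by $\delta$ using condition 2 (since $\vr{x}'$ is seen). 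The first and third terms are each bounded by $L \cdot \Nx{\Nx{\vr{x} - \vr{x}'}} < L \cdot (\delta/L) = \delta$ using condition 1 (Smoothness / $L$-Lipschitzness of $f$ and of $f^*$ respectively). Summing gives $\Nx{f(\vr{x}) - f^*(\vr{x})} < 3\delta < \varepsilon$, where the last step uses the choice $\delta < \varepsilon/3$.

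Combining both cases yields $\Nx{f(\vr{x}) - f^*(\vr{x})} < \varepsilon$ for all $\vr{x}$, seen or unseen, which is the claim. There is no genuine obstacle here — the only thing to be careful about is bookkeeping: making sure the factor $L$ from the two Lipschitz bounds exactly cancels the $1/L$ in the diversity radius $\delta/L$, and that the three resulting $\delta$'s are what forces the choice $\delta < \varepsilon/3$ in the hypothesis. (One could also note the bound is strict throughout because conditions 2 and 3 are stated with strict inequalities.)
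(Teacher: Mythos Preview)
Your proposal is correct and matches the paper's own proof essentially line for line: both use the same three-term triangle inequality, bound the outer terms via $L$-Lipschitzness against the $\delta/L$ diversity radius, the middle term via expressivity, and conclude $3\delta < \varepsilon$. The only difference is cosmetic (you explicitly separate the trivial seen case and swap the roles of $\vr{x}$ and $\vr{x}'$), so there is nothing to add.
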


\begin{proof}
    This statement is a simple consequence of the triangle inequality. For any unseen $x'$ in the $\delta/L$-neighborhood of seen $x$, we have that
    \begin{align*}
        \Nx{f(\vr{x}') - f^*(\vr{x}')} &\leq \Nx{f(\vr{x}') - f(\vr{x})} + \Nx{f(\vr{x}) - f^*(\vr{x})} + \Nx{f^*(\vr{x}) - f^*(\vr{x}')} \\
        &\leq 3 \delta\,.
    \end{align*}
    Hence, if we select $\delta < \varepsilon / 3$, we must have that $ \Nx{f(\vr{x}') - f^*(\vr{x}')} < \varepsilon$. 
\end{proof}

In this way, if a model satisfies the above three conditions, it generalizes to unseen inputs for a task defined by the labeling function $f^*$. The first condition for smoothness is regularly achieved by standard neural networks \citep{khromov_lipschitz}. The second condition corresponds to a notion of \textit{expressivity} --- that is, a model $f$ should be able to approach arbitrarily close to zero on its training data. For modern neural network models trained on simple tasks, this is a frequent occurrence \citep{zhang_expressive}. The third condition corresponds to a coarse description of \textit{data diversity}. The training data should be sufficiently diverse such that all unseen examples are very close to an example seen during training. This condition may be difficult to achieve in practice, but it offers a very coarse upper bound on the requisite data diversity required to generalize on unseen examples. Nonetheless, an MLP trained online on a suitably constrained input space may very well achieve this condition. Given further assumptions on $f$ (e.g. $f$ is an MLP), it is likely possible to shrink this data diversity bound considerably.

Regardless whether an MLP achieves these conditions exactly, we next show that with sufficient data diversity, an MLP equipped with an embedding matrix and trained through gradient descent \textit{does} solve a relational task of the form posited in Theorem \ref{th:boix}, generalizing perfectly to unseen data. 

\subsection{Same-different task}
We now demonstrate empirically that a vanilla MLP trained with gradient descent will discover a solution that generalizes to unseen symbols. While the tasks we explore in Section \ref{sec:relational_tasks} already indicate that MLPs solve relational tasks, to address any remaining doubt, we pursue an ostensibly impossible task as specified in \citet{boix_transformers_abstract}. In particular, we examine the same-different task.

As noted in Appendix \ref{app:boix_comments}, the same-different task consists of two templates, $z_1 = \alpha \alpha$ and $z_2 = \alpha \beta$, with labels $f^*(\alpha \alpha) = 1$ and $f^*(\alpha \beta) = 0$. Following \citet{boix_transformers_abstract}, we consider input strings $\vr{x} = x_1 x_2 \in \mathcal{X}^2$ that are \textit{one-hot encoded} before being passed through a randomly-initialized embedding matrix. As previously discussed, this embedding enables the model to circumvent the impossibility result (Theorem \ref{th:boix}) from \citet{boix_transformers_abstract}. The remainder of our model is exactly the MLP described in Appendix \ref{app:model_task_details}. We use an MLP with 4 hidden layers (in addition to an embedding layer) all of width 256 dimensions. The MLP is trained on batches of 128 examples sampled from a set of symbols with varying size $|\mathcal{X}|$, with roughly even positive and negative examples. In each run, the MLPs are first trained for $10,000$ batches on half the symbols in $\mathcal{X}$, then tested on the other half. All remaining hyperparameters are specified in Appendix \ref{app:model_task_details}.

\begin{figure}
    \centering
    \includegraphics[scale=0.6]{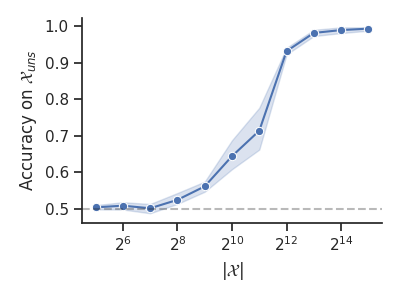}
    \caption{\textbf{MLP accuracy on unseen symbols for the same-different task.} The gray dashed line indicates chance-level performance. Shaded region indicates 95 percent confidence regions estimated from 5 replications. For higher data diversity (i.e. number of symbols in the task), the MLP generalizes progressively better. Beyond roughly $2^9$ symbols in the task, the MLP performs substantially above chance, and approaches perfect generalization beyond $2^{12}$ symbols.}
    \label{fig:same_diff_mlp_acc}
\end{figure}

We plot the performance of an MLP on this task in Figure \ref{fig:same_diff_mlp_acc}. For this task, data diversity refers to the number of symbols in $\mathcal{X}$. With higher data diversity, we see that the MLP improves progressively at generalizing on unseen symbols. Beyond about $2^{12}$ symbols, the MLP generalizes near-perfectly, confirming that these models do, indeed, learn to reason relationally. This result is particularly interesting because it shows that, with sufficient data diversity, the MLP \textit{generalizes to completely novel symbols}. 

\subsection{Discussion}
Our results are consistent with \citet{geiger_relational_nonsym}, who also find empirically that MLPs (among other architectures) reason relationally and generalize robustly to unseen inputs. 
We complement their results by further evidencing the possible conditions where MLPs may continue to generalize successfully. 
\citet{geiger_relational_nonsym} argue that neural networks require ``non-featural input representations" to generalize. A representation is featural if it encodes interpretable features of the task in axis-aligned dimensions. One-hot token encodings are featural, but randomized encodings are not. As in \citet{geiger_relational_nonsym}, we show that featural representations like one-hot encodings remain usable provided they that pass through an embedding matrix, becoming non-featural and circumventing the impossibility result found by \citet{boix_transformers_abstract}. In this way, with sufficient data diversity, an MLP still generalizes to unseen inputs, even if the inputs are unseen one-hot encodings.

Despite our success above, many earlier studies document cases where common neural network architectures fail to reason relationally \citep{marcus_rule,serre_not_so_clevr,lake_gen_wo_sys,alhama_review_neurosymb}. One important reason for the failure may be that the task inputs are very large and complex, as in visual reasoning \citep{serre_not_so_clevr,serre_good_bad_ugly}. Proposition \ref{prop:conditions} suggests that the data diversity required for successful generalization scales exponentially with the dimension of the inputs in the worst case. It is possible that given a sufficiently vast dataset, an MLP \textit{would} perform well on visual reasoning tasks. Furthermore, having shown above that MLPs are decisively capable of relational reasoning (especially when presented with idealized stimulus embeddings, as in Section \ref{sec:relational_tasks}), their failure on complex tasks highlights a need to separate a model's ability to reason relationally from its ability to learn sufficiently rich feature representations. In realistic data-limited scenarios, perhaps an MLP paired with a more bespoke module for feature learning would reason quite successfully. We anticipate further work that more closely investigates whether these failures stem from data limitations, insufficient feature learning, or some other cause, thereby building a more complete and updated picture of relational reasoning in neural networks.

\section{Experiment: Simple tasks} \label{sec:simple_tasks}
In the main text, we showed that MLPs perform comparably with Transformers on ICL regression and classification, and better on relational tasks. In this separate set of experiments, we examine a setting in which MLPs are \textit{decisively superior}.
To do so, we depart from in-context tasks and consider simple (non-ICL) regression and classification.

\subsection{Simple regression}
Following the classic regression setup, the model receives as input a single point $\vr{x} \in \R^n$, and must output the corresponding $y \in \R$ which is related through $y = \vr{x} \cdot \vr{\beta}$. \textbf{Note}: this is \textit{not} in-context regression, so the model receives only a single input $\vr{x}$ and the weights $\vr{\beta}$ remain fixed throughout the duration of the task. For the Transformer, unless otherwise stated, each input coordinate is processed as a ``token" with depth 1. Additional details are provided in Appendix \ref{app:simple_reg}.

\paragraph{Results.} In Figure \ref{fig:simple_tasks}a, we plot the MSE of vanilla MLPs and Transformers as a function of compute on $n = 64$ dimensional regression. The gap between the two models is substantial. The Transformer seems to struggle especially for larger inputs. For smaller $n$, the compute gap shrinks between MLPs and Transformers (Figure \ref{fig:simple_supp}).
If your are stuck with large $n$, one potential strategy for improving the Transformer's performance is to manually chunk the inputs into larger tokens, reducing the total number of tokens. In the extreme case, we chunk the entire input into a single token (effectively transposing the input). As the token size increases, the Transformer's effiency smoothly improves until it reaches a level comparable to the MLP (Figure \ref{fig:simple_tasks}b). Indeed, in the extreme case of a single input token, the Transformer is almost identical to an MLP anyway.

\subsection{Simple classification}
We next consider a classic classification setup. The model receives a single point $\vr{x} \in \R^n$ that was sampled from 1 of $k$ different clusters. The model must output the correct label $y$ of the corresponding cluster. This is $\textit{not}$ in-context classification, so the model receives only a single input $\vr{x}$ and the cluster/label mapping remains fixed throughout the duration of the task. Additional details are provided in Appendix \ref{app:simple_cls}.

\paragraph{Results.} The same results continue to hold. As shown in Figures \ref{fig:simple_tasks}(c,d), for $n = 64$ dimensional classification, there is a wide compute gap between a vanilla MLP and a Transformer model, though the gap can be narrowed by manually chunking the inputs into larger tokens. Figure \ref{fig:simple_supp} gives performance for inputs of different dimensions, where smaller $n$ narrow the gap between the two models.

\subsection{Discussion}
Evidently simple tasks with long inputs work against the Transformer's attention mechanism. Shortening the context by reducing the task dimension, chunking inputs into larger tokens, or bypassing the attention mechanism altogether by stacking the input into a single token all improve the Transformer's efficiency. It is not immediately obvious why the Transformer performs so dramatically worse compared to the MLP for larger $n$, though it is well-known that Transformers can struggle with long inputs \citep{long_range_arena}.

\begin{figure}
    \centering
    \includegraphics[width=\linewidth]{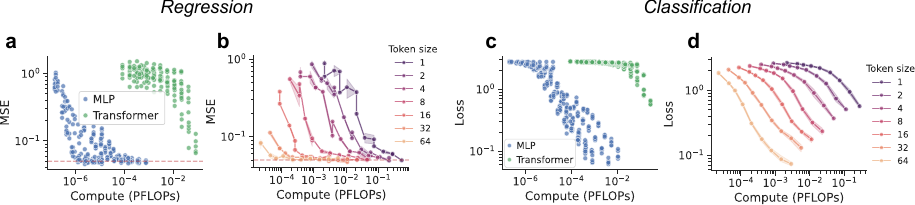}
    \caption{\textbf{Simple regression and classification results.} \textbf{(a)} MLPs attain substantially lower MSE at lower compute than Transformers. The red line corresponds to the minimum attainable MSE. \textbf{(b)} Transformers attain performance given larger token sizes. \textbf{(c, d)} Same as in (a, b), for classification, with $k = 16$ clusters. \textbf{(all)} We use $n = 64$ dimension inputs. Other parameterizations are explored in Appendix \ref{app:mo_figures}. Shaded regions correspond to 95 percent confidence intervals estimated from 5 replications.}
    \label{fig:simple_tasks}
\end{figure}

\section{Model and task configurations} \label{app:model_task_details}
In the following appendix, we provide all details on the specific model and task configurations used in this study, including architecture, hyperparameter settings, training methodology, and more.

\subsection{Code} \label{app:code}
For the most precise information on our setup, please refer to our GitHub code repository: 

\hspace{15pt} \url{https://github.com/wtong98/mlp-icl}

There, you will find all code used to reproduce the plots in this document, as well as any minor implementation details omitted from this appendix. If you notice an error, we welcome your pull requests!

\subsection{MLP} \label{app:mlp_arch}
The MLP accepts inputs $\vr{x} \in \R^n$. If a task provides inputs of shape $L \times D$ (length by token depth), the inputs are first flattened to size $n = LD$ before being passed to the MLP. A model with $\ell$ hidden layers then proceeds as follows:
\begin{align*}
    \vr{h}_1(\vr{x}) &= \phi \Px{\vr{W}_1 \vr{x} + \vr{b}_1} \\
    \vr{h}_2(\vr{x}) &= \phi \Px{\vr{W}_2 \vr{h_1}(\vr{x}) + \vr{b}_2} \\
    &\hspace{6pt}\vdots \\
    \vr{h}_\ell(\vr{x}) &= \phi \Px{\vr{W}_\ell \vr{h_{\ell - 1}}(\vr{x}) + \vr{b}_\ell} \\
    \vr{f_{\text{MLP}}} (\vr{x}) &= \vr{W}_{\text{out}} \vr{h}_\ell(\vr{x}) + \vr{b}_{\text{out}}
\end{align*}
For all tasks, we use ReLU activation functions applied pointwise $\phi(\vr{x}) = \max(\vr{x}, \vr{0})$. Widths of all hidden layers are fixed to the same value $H$. As with all models, all training examples are presented online with batch size 128. Training uses AdamW \citep{adamw} with learning rate $\alpha = 1 \times 10^{-4}$ and weight decay $\lambda = 1 \times 10^{-4}$. The hyperparameters used to train MLPs on each task are presented in Table \ref{tab:mlp_params}.

\begin{table}[h]
  \caption{MLP hyperparameters}
  \label{tab:mlp_params}
  \centering
  \begin{tabular}{lccr}
    \toprule
    Task                   & Depth ($\ell$) &  Width ($H$)    & Train iterations \\
    \midrule
    ICL regression         & 2 - 8          & 128 - 2048 & $\leq 2,048,000$ \\
    ICL classification     & 2 - 8          & 64 - 1024  & $\leq 128,000$   \\
    Simple regression      & 1 - 4          & 4 - 256    & $\leq 64,000$    \\
    Simple classification  & 1 - 4          & 4 - 256    & $\leq 64,000$    \\
    Match-to-sample        & 1 - 4          & 4 - 256    & $\leq 8,000$     \\
    Sphere oddball         & 1 - 4          & 4 - 256    & $\leq 8,000$     \\
    Line oddball           & 1 - 4          & 4 - 256    & $\leq 8,000$     \\
    \bottomrule
  \end{tabular}
\end{table}

\subsection{Mixer}
The MLP-Mixer accepts inputs $\vr{X} \in \R^{L \times D}$ (length by token depth). If a task does not provide tokenized inputs, we assume $D = 1$ unless otherwise stated, and reshape accordingly. A model with $\ell$ hidden layers then proceeds as follows:
\begin{align*}
    \vr{h}_1 (\vr{X}) &= \phi(\vr{Z}_1 (\vr{b}_1^\intercal + \vr{X} \vr{W}_1) + \vr{c}_1) \\
    \vr{h}_2 (\vr{X}) &= \phi(\vr{Z}_2 (\vr{b}_2^\intercal + \vr{h}_1(\vr{X}) \vr{W}_2) + \vr{c}_2) \\
    &\hspace{6pt}\vdots \\
    \vr{h}_\ell (\vr{X}) &= \phi(\vr{Z}_\ell (\vr{b}_\ell^\intercal + \vr{h}_{\ell - 1}(\vr{X}) \vr{W}_\ell) + \vr{c}_\ell) \\
    \vr{f}_{\text{MIX}}(\vr{X}) &= \vr{W}_{\text{out}} \vr{h}_\ell (\vr{X})^{(-1)} + \vr{b}_{\text{out}}
\end{align*}
The matrices $\vr{W}$ mix within token dimensions, and share a fixed hidden width $H$, so $\vr{W}_i \in \R^{H \times H}$ for $1 < i < \ell$. The matrices $\vr{Z}$ mix across spatial dimensions, and share a fixed channel width $C$, so $\vr{Z}_i \in \R^{C \times C}$ for $1 < i < \ell$. The bias vectors $\vr{b}$ and $\vr{c}$ are assumed to broadcast over unit dimensions as expected. The index $-1$ in $\vr{h}_\ell (\vr{X})^{(-1)}$ refers to taking the last token in the layer, producing an output vector with length $H$. We again use point-wise ReLU activations $\phi(\vr{X}) = \max(\vr{X}, 0)$. Our Mixer is a simplified version of the original model proposed in \citet{mlp_mixer}, and differs in a number of small ways:
\begin{itemize}
    \item We use only a single hidden layer per Mixer layer, rather than two.
    \item We apply the point-wise activation \textit{after} the final spatial mixing, and not between spatial and token mixings.
    \item We do not use layer norm or skip connections.
\end{itemize}
Using the full original model proved to be unnecessary in our setting, so we proceeded with this simpler version.

As with all models, all training examples are presented online with batch size 128. Training uses AdamW with learning rate $\alpha = 1 \times 10^{-4}$ and weight decay $\lambda = 1 \times 10^{-4}$. The hyperparameters used to train MLPs on each task are presented in Table \ref{tab:mixer_params}. 

\begin{table}[h]
  \caption{Mixer hyperparameters}
  \label{tab:mixer_params}
  \centering
  \begin{tabular}{lcccr}
    \toprule
    Task                   & Depth ($\ell$) &  Hidden width ($H$) & Channel width ($C$)   & Train iterations \\
    \midrule
    ICL regression         & 2 - 8          & 32 - 512 & 64 & $\leq 500,000$ \\
    ICL classification     & 2 - 8          & 16 - 256 & 64 & $\leq 24,000$   \\
    \bottomrule
  \end{tabular}
\end{table}

\subsection{Transformer}
The Transformer accepts inputs $\vr{X} \in \R^{L \times D}$ (length by token depth). If a task does not provide tokenized inputs, we assume $D = 1$ unless otherwise stated, and reshape accordingly. A model with $\ell$ hidden layers then proceeds as follows:
\begin{align*}
    \tilde{\vr{X}} &= \vr{X} + PE(\vr{X}) \\
    \vr{a}_1 (\vr{X}) &= LN(\vr{A}_1 \tilde{\vr{X}} \vr{V}_1 + \tilde{\vr{X}}) \\
    \vr{h}_1 (\vr{X}) &= LN(\vr{c}_1^{\intercal} + \phi(\vr{b}_1^{\intercal} + \vr{a}_1(\vr{X}) \vr{W}_1^{(1)}) \vr{W}_1^{(2)} + \vr{X})  \\
    &\hspace{6pt}\vdots \\
    \vr{a}_\ell (\vr{X}) &= LN(\vr{A}_\ell \vr{h}_{\ell-1}(\vr{X}) \vr{V}_\ell + \vr{X}) \\
    \vr{h}_\ell (\vr{X}) &= LN(\vr{c}_\ell^{\intercal} + \phi(\vr{b}_\ell^{\intercal} + \vr{a}_\ell(\vr{X}) \vr{W}_\ell^{(1)}) \vr{W}_\ell^{(2)} + \vr{X})  \\
    \vr{f}_{\text{TR}}(\vr{X}) &= \vr{W}_{\text{out}} \vr{h}_\ell (\vr{X})^{(-1)} + \vr{b}_{\text{out}}
\end{align*}
The attention matrices $A_i$ are single-headed, and constructed as
\[A_i = \sigma\Px{\mathsf{mask}\Px{\frac{1}{\sqrt{H}}(Q_i X_i)(K_i X_i)^\intercal)}}\]
where ``mask" corresponds to a causal attention mask, and $\sigma$ refers to a softmax applied per query. As is now popular, we use GeLU activations applied pointwise for $\phi$. We fix the hidden dimension across all key, query, value, and weight matrices to be of width $H$. We use sinusoidal positional encodings for $PE$ and layer normalization as indicated by $LN$. One exception is for ICL regression, which does not require positional encodings due to the input format (Appendix \ref{app:icl_reg}), so they are omitted in this case. The bias vectors $\vr{b}$ and $\vr{c}$ are assumed to broadcast over unit dimensions as expected. The index $-1$ in $\vr{h}_\ell (\vr{X})^{(-1)}$ refers to taking the last token in the layer, producing an output vector with length $H$. Our architecture is precisely the decoder-only Transformer architecture first described in \citet{vaswani_transformer}, with the exception that we do not use dropout.

As with all models, all training examples are presented online with batch size 128. Training uses AdamW with learning rate $\alpha = 1 \times 10^{-4}$ and weight decay $\lambda = 1 \times 10^{-4}$. The hyperparameters used to train MLPs on each task are presented in Table \ref{tab:transformer_params}.

\begin{table}[h]
  \caption{Transformer hyperparameters}
  \label{tab:transformer_params}
  \centering
  \begin{tabular}{lccr}
    \toprule
    Task                   & Depth ($\ell$) &  Width ($H$)     & Train iterations \\
    \midrule
    ICL regression         & 2 - 8          & 32 - 512 & $\leq 600,000$ \\
    ICL classification     & 2 - 8          & 16 - 256  & $\leq 16,000$   \\
    Simple regression      & 1 - 4          & 8 - 32    & $\leq 256,000$    \\
    Simple classification  & 1 - 4          & 8 - 32    & $\leq 128,000$    \\
    Match-to-sample        & 1 - 4          & 8 - 32    & $\leq 8,000$     \\
    Sphere oddball         & 1 - 4          & 8 - 32    & $\leq 8,000$     \\
    Line oddball           & 1 - 4          & 8 - 32    & $\leq 8,000$     \\
    \bottomrule
  \end{tabular}
\end{table}

\subsection{RB MLP} \label{app:rb_mlp}
The relationally-bottlenecked MLP is architecturally identically to the vanilla MLP described above in Appendix \ref{app:mlp_arch}, but with the crucial difference that the inputs are preprocessed to preserve only (dot-product) relations.

The RB MLP accepts inputs $\vr{X} \in \R^{L \times D}$ (length by token depth). The inputs are processed into a relation matrix $\vr{R}$ such that each entry is
\[R_{ij} = (\vr{x}_i - \overline{\vr{x}}) \cdot (\vr{x}_j - \overline{\vr{x}})\]
where $\vr{x}_i \in \R^D$ refers to the $i$th row of $X$, and $\overline{\vr{x}} = \frac{1}{L} \sum_i \vr{x}_i$ is the average across all $\vr{x}_i$. Relations vectors $\vr{r}$ are then generated by either selecting a specific column $\vr{r} = \vr{R}^{(j)}$ (as in the MTS task) or flattening the entire matrix of relations $\vr{r} = \mathsf{flat}(\vr{R})$. The output of the RB MLP is then simply
\[\vr{f}_{\text{RB}}(\vr{r}) = \vr{W}_{\text{out}} \vr{r} + \vr{b}_{\text{out}}\]
For the ``deep" RB MLP used in the line oddball task, there is an additional set of two hidden layers between $\vr{r}$ and the readout weights $\vr{W}_{\text{out}}$, with width 256. All other training parameters are equivalent to the above models. 

\subsection{ICL regression} \label{app:icl_reg}
We prepare in-context regression in a setup that closely mimics \citet{raventos_pretraining_task_diversity}, though without an autoregressive objective. The input consists of a sequence of values $(\vr{x}_1,y_1), (\vr{x}_2,y_2), \ldots, (\vr{x}_L,y_L)$, where $\vr{x}_i \in \R^n$ and $y_i \in \R$. The $\vr{x}_i, y_i$ pairs are linearly related through a set of weights $\vr{\beta} \in \R^n$ such that $y_i = \vr{x}_i \cdot \vr{\beta} + \varepsilon$, where $\varepsilon \sim \mathcal{N}(0, \sigma^2)$ corresponds to noise. Finally, the input includes a query $\vr{x}_q$. The model output is a single scalar regressed against the corresponding $y_q$. Inputs are sampled as $\vr{x} \sim \mathcal{N}(\vr{0}, \vr{I})$ and weights are sampled as $\vr{\beta} \sim \mathcal{N}\Px{\vr{0}, \vr{I}/n}$. Before being presented to the model, all inputs are packed into an input matrix $\tilde{\vr{X}} \in \R^{(L+1) \times (n + 1)}$ with the following structure \citep{zhang_reg_emb}
\[\tilde{\vr{X}} = \begin{pmatrix}
    \vr{x}_1 & \vr{x}_2 & \cdots & \vr{x}_L & \vr{x}_q \\
    y_1 & y_2 & \cdots & y_L & 0
\end{pmatrix}\]
The model returns a scalar value estimate of $y_q$, and is trained using the mean-squared-error. Note: this format does not require positional encodings. Following \citet{zhang_reg_emb}, we omit positional encodings for this task.

As in \citet{raventos_pretraining_task_diversity}, we fix a finite pool of weights before training $\vr{\beta}_1, \vr{\beta}_2, \ldots, \vr{\beta}_k$, where $\vr{\beta}_i \sim \mathcal{N}(\vr{0}, \vr{I}/n)$. For each training example, we sample a new $\vr{\beta}$ by selecting uniformly at random one weight from the pool $\Bx{\vr{\beta_i}}_{i=1}^k$. We also consider the limit $k \rightarrow \infty$, which corresponds to sampling $\vr{\beta} \sim \mathcal{N}(\vr{0}, \vr{I}/n)$ afresh rather than drawing from a fixed pool. During testing, we probe the model's performance both on the training distribution where the weights are restricted to a finite pool $\vr{\beta} \sim \mathcal{U}\Px{\Bx{\vr{\beta_i}}_{i=1}^k}$ and an unrestricted distribution where the weights are drawn freely $\vr{\beta} \sim \mathcal{N}(\vr{0}, \vr{I}/n)$.

Unless stated otherwise, all of our experiments use $n = 8$ dimensional regression with $L = 8$ points in the context, and noise level $\sigma^2 = 0.05$.

\paragraph{Bayes estimators.} We compare our models to two different Bayes estimators that correspond to priors assuming finite or infinite $k$.

For finite $k$ where weights $\vr{\beta}$ are sampled uniformly from a pool of $k$ possibilities, the Bayes optimal estimator is given by the discrete minimum mean-squared error (dMMSE) estimator, based on the estimator formulated in \citet{raventos_pretraining_task_diversity}
\[\hat{\vr{\beta}}_{\text{dMMSE}} = \sum_{i=1}^k w_i \vr{\beta}_i \]
where the weights $w_i$ are given by
\[w_i \propto \exp \Bx{-\frac{1}{2\sigma^2} \sum_{j=1}^L \Px{y_j - \vr{x}_j \cdot \vr{\beta}_i}^2}\]
normalized such that $\sum_i w_i = 1$.

In the case $k \rightarrow \infty$, the Bayes optimal estimator is simply the familiar Ridge estimator with Bayes optimal regularization
\[\hat{\vr{\beta}}_{\text{Ridge}} = \Px{\vr{X}^\intercal \vr{X} + n \sigma^2 \vr{I}}^{-1} \vr{X}^\intercal \vr{y}\]
where the rows of $\vr{X}$ are the context points, and $\vr{y} = (y_1, y_2, \ldots, y_L)$ are the corresponding labels.

\subsection{ICL classification} \label{app:icl_cls}
We prepare ICL classification in a setup that closely mimics \citet{reddy_mechanistic}. We begin with a set of labels $\vr{\alpha}_1, \vr{\alpha}_2, \ldots \vr{\alpha}_C \in \R^n$ that correspond to class indices $1, 2, \ldots C$. Labels are sampled as $\alpha \sim \mathcal{N} (\vr{0}, \vr{I}/n)$. The model ultimately predicts the class index, but the real-valued labels provide content of the correct dimension to fill an input without arbitrary padding (described further below).

Points are sampled from a Gaussian mixture model $\mathcal{M}_k$ consisting of $k$ components, where $k \geq C$ (we allow multiple clusters to have the same class label). Each component is associated with a center $\vr{\mu}_k \sim \mathcal{N}(\vr{0}, \vr{I}/n)$. A point is sampled from the $k$th component as
\[\vr{x}_k = \frac{\vr{\mu}_k + \varepsilon \vr{\eta}}{\sqrt{1 + \varepsilon^2}}\]
where $\vr{\eta} \sim \mathcal{N}(\vr{0}, \vr{I}/n)$ and $\varepsilon$ governs the within-cluster variability. Below in Figure \ref{fig:cls_icl_inf_supp}, we also consider a $k \rightarrow \infty$ setting, where the number of mixture components is infinite. This settings corresponds to a case where the mixture centers $\vr{\mu}_k$ are resampled for each example, always producing novel clusters. In the finite $k$ case, mixture centers remain fixed throughout the duration of the task.

An input sequence consists of $L$ context exemplars $(\vr{x}_1, \vr{y}_1), (\vr{x}_2, \vr{y}_2), \ldots, (\vr{x}_L, \vr{y}_L)$ followed by a query point $\vr{x}_q$, where $\vr{x}_i \sim \mathcal{M}_k$ and $\vr{y}_i \in \Bx{\vr{\alpha}_j}$ is the corresponding label for the cluster that originated the point. The model must predict the corresponding query label $\vr{y}_q$, and output the class index associated with this label. The inputs are packed into an input matrix $\tilde{\vr{X}} \in \R^{(2L + 1) \times n}$ which has structure
\[\tilde{\vr{X}} = \begin{pmatrix}
    \vr{x}_1 & \vr{y}_1 & \vr{x}_2 & \vr{y}_2 & \cdots & \vr{x}_L & \vr{y}_L & \vr{x}_q
\end{pmatrix}\]
The model outputs logits over class indices, and is trained using cross-entropy loss. 

We also parameterize the inputs by \textit{burstiness} $B$, which is the number of repeats per cluster in the context ($B$ must divide the context length $L$). For example, $B = 2$ means there are exactly two points from each cluster represented in the inputs.

Unless otherwise stated, we use $n = 8$ dimensional inputs, $C = 32$ class labels, and within-cluster variability $\varepsilon = 0.1$.

\subsection{Match-to-sample} \label{app:mts}
The match-to-sample task proceeds as follows. The model is presented with $L$ context points $\vr{x_1}, \vr{x_2}, \ldots, \vr{x}_L \in \R^n$ followed by a query point $\vr{x}_q$. The inputs are packed into an input matrix $\tilde{\vr{X}} = (\vr{x}_1, \vr{x}_2, \ldots, \vr{x}_L, \vr{x}_q) \in \R^{(L+1) \times n}$ before being passed to the model.

The context points are evenly distributed along a sphere $\mathcal{S}^n$ with unit radius centered at the origin. Points are rotated by a random angle so that their absolute positions vary from input to input. The model must return the index of the context point closest to the query $y = \argmax_i \Px{\vr{x}_i \cdot \vr{x}_q}$, and is trained using cross-entropy loss.

Unless otherwise stated, we use $L = 5$ context points and $n = 2$ dimensional inputs.

\subsection{Sphere oddball} \label{app:sphere_odd}
The sphere oddball task proceeds as follows. The model is presented with $L$ context points $\vr{x}_1, \vr{x}_2, \ldots, \vr{x}_L \in \R^n$. (There are no query points.) The context points are sampled as $\vr{x} \sim \mathcal{N}(\vr{\mu}, \vr{I})$. The center is sampled uniformly from a box $\vr{\mu} \sim \mathcal{U}\Hx{-B, B}^n$. One point in the context is selected at random and perturbed in a random direction $\vr{v}$ with magnitude $d = ||\vr{v}||$, so that $\vr{x}_{\text{oddball}} \leftarrow \vr{x}_{\text{oddball}} + \vr{v}$. The model must return the index $y$ of the oddball point in the context, and is trained using cross-entropy loss. Both the center $\vr{\mu}$ and points $\vr{x}_i$ are sampled afresh from example to example, necessitating a general relational solution.

Unless otherwise stated, we use $n=2$ dimensional inputs, $L = 6$ points in the context, and a box size of $B = 10$.

\subsection{Line oddball} \label{app:line_odd}
The line oddball task proceeds as follows. For each training example, we select an $n-1$ dimensional plane with random orientation that passes through the origin. Context points $\vr{x}_1, \vr{x}_2, \ldots, \vr{x}_L \in \R^n$ are Gaussian distributed along this subspace with zero mean and unit variance. One context point is selected at random to be the oddball, and is perturbed by a distance $d$ in the direction perpendicular to the line. The model must output the index $y$ of the oddball point, and is trained using cross-entropy.

Unless otherwise stated, we use $n=2$ dimensional inputs and $L = 6$ points in the context.

\subsection{Simple regression} \label{app:simple_reg}
Simple (non-ICL) regression is the classic regression setup. The model receives as input a single point $\vr{x} \in \R^n$, and must output the corresponding $y \in R$ which is related through $y = \vr{x} \cdot \vr{\beta} + \varepsilon$. Weights are sampled as $\vr{\beta} \sim \mathcal{N}(\vr{0}, \vr{I}/n)$, and noise is sampled as $\varepsilon \sim \mathcal{N}(0, \sigma^2)$. Weights $\vr{\beta}$ are sampled once, then remain fixed through the entire duration of the task. The model is trained using MSE loss.

Unless otherwise stated, we consider $n = 64$ dimensional regression with noise level $\sigma^2 = 0.05$.

In Appendix \ref{app:mo_figures}, we also consider a simple non-linear version of regression where $y = \Px{\vr{x} \cdot \vr{\beta}}^p + \varepsilon$ for powers $p = 2$ and $3$.

\subsection{Simple classification} \label{app:simple_cls}
Simple (non-ICL) classification proceeds as follows. The model receives as input a single point $\vr{x} \in \R^n$ that we sample from 1 of $k$ different clusters. Cluster centers $\vr{\mu}_i$ are sampled as $\vr{\mu}_i \sim \mathcal{N}(\vr{0}, \vr{I}/n)$. The label $y$ of $\vr{x}$ is given by
\[y = \argmin_i \Nx{\Nx{\vr{x} - \vr{\mu}_i}}\]
Cluster centers are sampled once, then remain fixed throughout the entire duration of the task. The model is trained using cross-entropy loss.

Unless otherwise stated, we consider $n = 64$ dimensional inputs with $k = 16$ classes.

\subsection{Compute} \label{app:compute}
To measure the number of floating point operations (FLOPs) used to train a model, we use Jax's \href{https://jax.readthedocs.io/en/latest/aot.html}{cost analysis routines}. Specifically, we compute the total number of FLOPs required to perform a single step of gradient descent, then multiply this quantity by the total number of gradient steps used to train the model. 

All experiment were run on the Harvard FASRC research cluster. CPU requirements are negligible compared to GPU time, so they are omitted. All experiments required no more than 16 GB of RAM. The per-experiment GPU time on an A100 to generate the above figures are estimated at

\begin{itemize}
    \item \textbf{ICL regression}: 1500 GPU hours
    \item \textbf{ICL classification}: 500 GPU hours
    \item \textbf{Simple regression}: 50 GPU hours
    \item \textbf{Simple classification}: 50 GPU hours
    \item \textbf{Match-to-sample}: 10 GPU hours
    \item \textbf{Sphere oddball}: 10 GPU hours
    \item \textbf{Line oddball}: 10 GPU hours
\end{itemize}

The total GPU time is therefore roughly 2130 GPU hours. The compute used to generate these results represents less than 5 percent of the total compute deployed through the life-cycle of this research project.

\subsection{Software} \label{app:software}
All models are implemented and trained using the Jax \citep{jax} family of libraries, particularly Flax \citep{flax}. Plots are created using Seaborn \citep{seaborn} and Pandas \citep{pandas}.

\clearpage
\section{Additional figures} \label{app:mo_figures}

\begin{figure}[h]
    \centering
    \includegraphics[width=\linewidth]{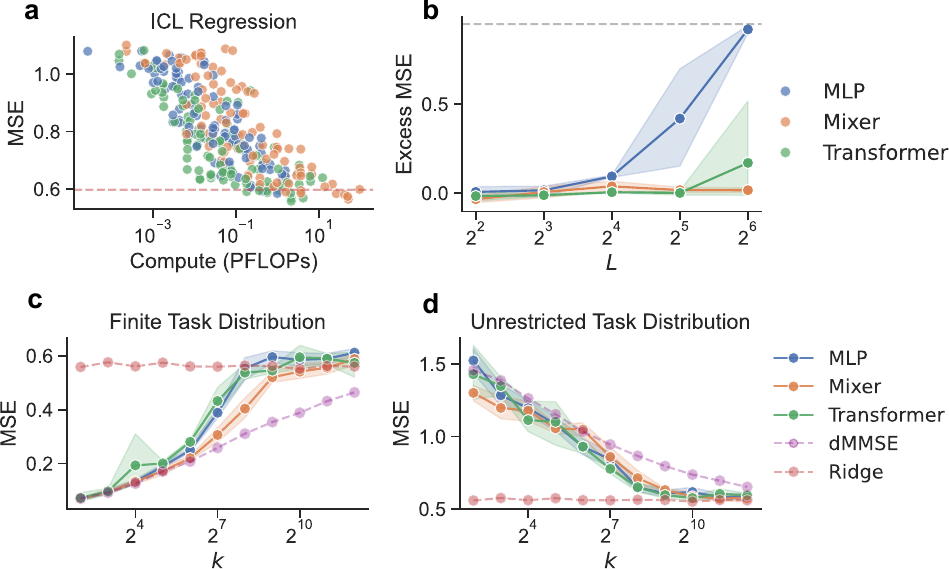}
    \caption{\textbf{ICL regression with an autoregressive objective.} For each input example $(\vr{x}_1, y_1, \vr{x}_2, y_2, \ldots, \vr{x}_L, y_L)$, we compute the autoregressive loss $\sum_i \mathcal{L} (f(\vr{x}_1, y_1, \vr{x}_2, y_2, \ldots \vr{x}_i), y_{i})$, for a neural network $f$ and MSE loss $\mathcal{L}$. For vanilla MLPs and Mixers, variable-length inputs are handled by padding inputs with zero up to the max length $L$. 
    \textbf{(a)} Compute vs. MSE on the unrestricted task distribution. Each point represents a single model, with particular parameters and training iterations. Just as in the fixed input length case, at large compute, MSE is approximately equal across all architectures. The red line corresponds to the Bayes optimal Ridge MSE. \textbf{(b)} Excess MSE (MSE above Bayes optimal) for varying context length $L$ on the unrestricted task distribution. Excess MSE remains flat for Mixers and Transformers, but rises for MLPs. The grey line corresponds to the excess MSE incurred by the zero predictor. Given compute limitations, we plot on a slightly narrower range of context lengths, but the overall trends remain consistent with the finite-input-length case. \textbf{(c, d)} IWL to ICL transition with increasing data diversity. We train on a finite distribution with $k$ weights, then test on both the finite training distribution and the unrestricted distribution. Just as with finite input lengths, all models exhibit a transition from IWL (represented by dMMSE) to ICL (represented by Ridge) as $k$ increases. Note: it is possible to ``outperform" Bayes optimal Ridge on the finite training distribution by learning in-weight the underlying $\vr{\beta}$'s.
    \textbf{(all)} We use $n = 8$ dimension inputs. All line plots feature 95 percent confidence intervals about the mean, estimated from 5 replications.}
    \label{fig:autoreg}
\end{figure}

\begin{figure}
    \centering
    \includegraphics[width=\linewidth]{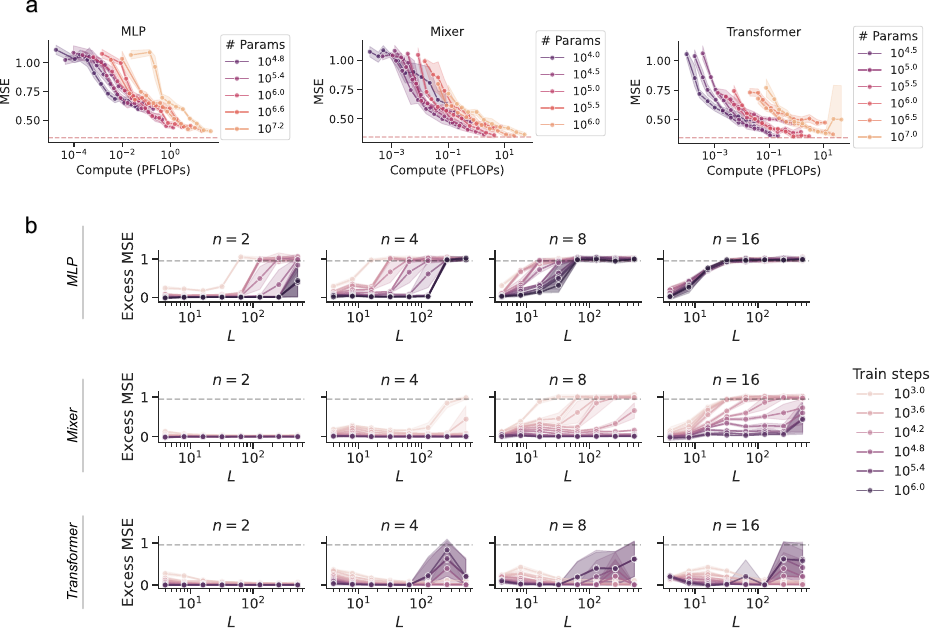}
    \caption{\textbf{ICL regression supplementary figures.} \textbf{(a)} MSE obtained across each architecture as a function of compute. Lines connect common models, with colors denoting different parameter counts. Hence, a single line traces the trajectory of a model across different training iterations. The red dashed line corresponds to the Bayes optimal MSE. \textbf{(b)} Excess MSE across different context lengths $L$, for different input dimensions $n$. Line colors indicate the number of elapsed training steps. The gray dashed line corresponds to the MSE obtained from always guessing zero. Particularly for high dimensions, MLPs struggle to learn in-context with larger context lengths. After sufficient training, both Mixers and Transformers can learn in-context even for very large input contexts. \textbf{(all)} Shaded regions correspond to 95 percent confidence intervals computed across 5 replications.}
    \label{fig:reg_icl_supp}
\end{figure}

\begin{figure}
    \centering
    \includegraphics[width=\linewidth]{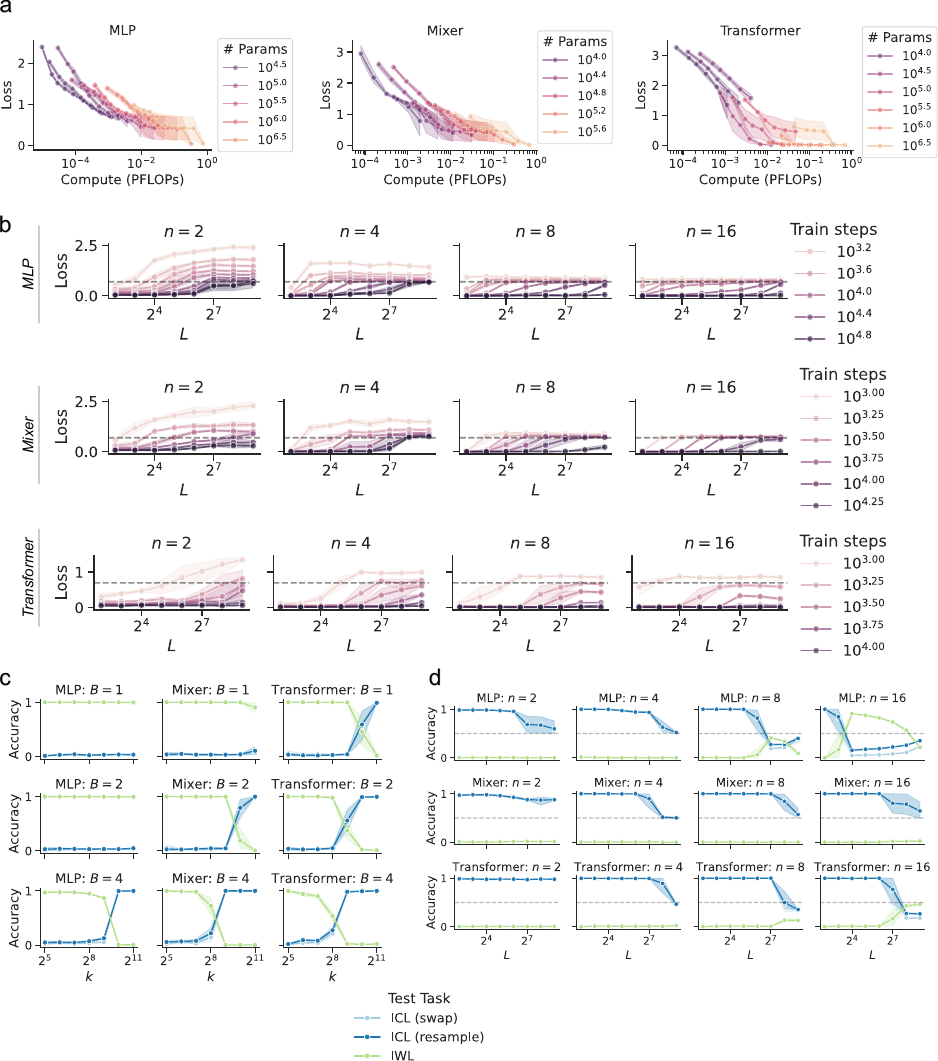}
    \caption{\textbf{ICL classification supplementary figures.} \textbf{(a)} MSE obtained across each architecture as a function of compute. Lines connect common models, with colors denoting different parameter counts. Hence, a single line traces the trajectory of a model across different training iterations. \textbf{(b)} Cross entropy loss across different context lengths $L$, for different input dimensions $n$. Line colors indicate the number of elapsed training steps. In these examples, $B = n / 2$, so there are only 2 labels present in each context (out of $C = 32$ total possible labels). The gray dashed line corresponds to the loss obtained by placing equal probability on the 2 labels present in the context. All models plateau for a time at guessing one among the two correct labels, before eventually collapsing to the correct ICL solution. \textbf{(c)} IWL to ICL transition for different burstiness $B$. Consistent with prior work \citep{reddy_mechanistic,chan_data_distrib_props}, higher burstiness encourages ICL. Transformers transition to ICL for lower burstiness and lower number of clusters $k$. \textbf{(d)} ICL vs. IWL behavior for $B = n/2$ and $k = 2048$ clusters across context lengths $L$ and input dimensions $n$. For the most part, these settings are sufficient to encourage ICL, including the configuration plotted in the main text Figure \ref{fig:icl_tasks}, though ICL appears to decay at higher dimensions and longer contexts. \textbf{(all)} Line plots feature 95 percent confidence intervals about the mean, computed across 5 replications.}
    \label{fig:cls_icl_supp}
\end{figure}

\begin{figure}
    \centering
    \vspace{-3.5em}
    \includegraphics[width=0.9\linewidth]{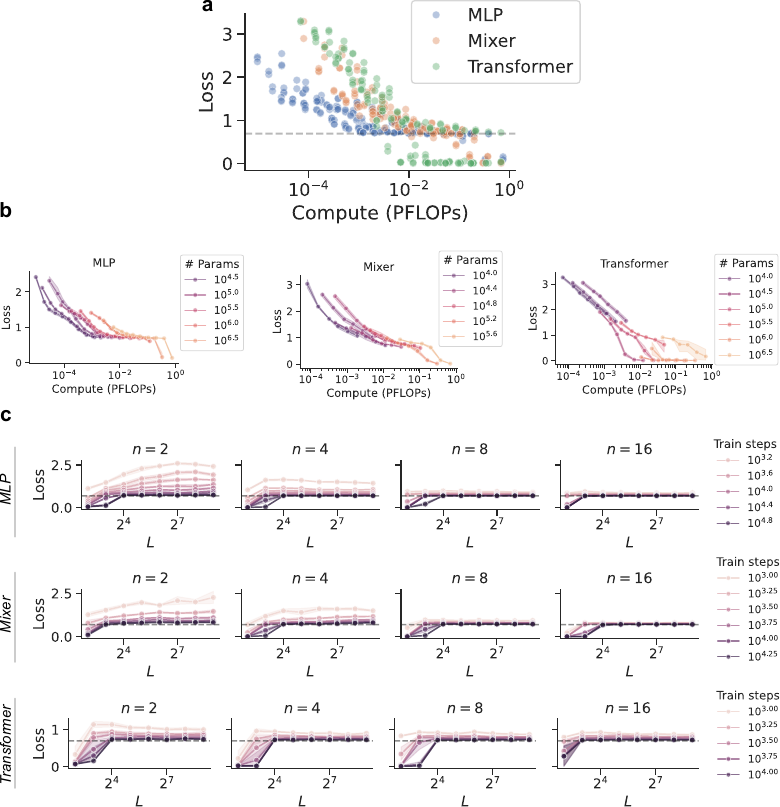}
    \caption{\textbf{ICL classification with infinite clusters.} Just as we can consider a $k \rightarrow \infty$ limit for ICL regression, where regression weights are sampled afresh for each example, we can consider an analogous $k \rightarrow \infty$ limit for ICL classification where clusters are resampled for each new example rather than being fixed to an underlying Gaussian mixture. Doing so forces each model to learn an in-context solution, but the learning outcomes turn out to be different. In particular, the task because substantially more difficult for longer contexts. For example, selecting a context length $L = 16$ with infinite clusters is enough to block any model from learning the full ICL solution. In contrast, $L = 16$ with finite clusters can still push a model to learn the full ICL solution (Figure \ref{fig:cls_icl_supp}), even if an in-weight solution is also available. For this reason, we consider only finite but large $k$ in the main text, enough to develop ICL without blocking learning for longer contexts. In this appendix figure, we examine more closely what happens if we attempt ICL classification with infinite clusters. \textbf{(a)} Loss obtained by each architecture as a function of compute, for context length $L = 8$ and $n = 8$ dimensional inputs with burstiness $B = 4$, so 2 of the 32 possible labels appears in each example. The gray dashed line corresponds to the loss obtained by a model if it assigns equal probability to the 2 labels present in the example. Like in Figure \ref{fig:icl_tasks}, we witness a plateau at the gray line, though it is somewhat more severe. Nonetheless, all models are able to perform the task perfectly with sufficient compute. \textbf{(b)} Line plot for each architecture in panel (a). Lines connect common models, with colors denoting different parameter counts. Hence, a single line traces the trajectory of a model across different training iterations. \textbf{(c)} Cross entropy loss across different context lengths $L$, for different input dimensions $n$. Line colors indicate the number of elapsed training steps. The gray dashed line corresponds to the loss obtained by placing equal probability on the 2 labels present in the context among the 32 total labels. For context lengths $L \geq 16$, all models plateau at the gray line and fail to learn further. Hence, it appears that even Transformers fail to learn the full in-context task, and remain stuck at a local optima of guessing one of the two labels present in the context. In contrast, if we fixed the number of clusters $k$ to a large but finite value, all models will learn the full ICL solution even though an in-weight solution is available (Figure \ref{fig:cls_icl_supp} above). In this way, it appears that finite clusters afford some curricular benefit that leads a model to the ICL solution, which the infinite case lacks. This discrepancy poses a fascinating topic for future study. \textbf{(all)} Shaded regions correspond to 95 percent confidence intervals computed across 5 replications.} 
    \label{fig:cls_icl_inf_supp}
\end{figure}

\begin{figure}
    \centering
    \includegraphics[width=\linewidth]{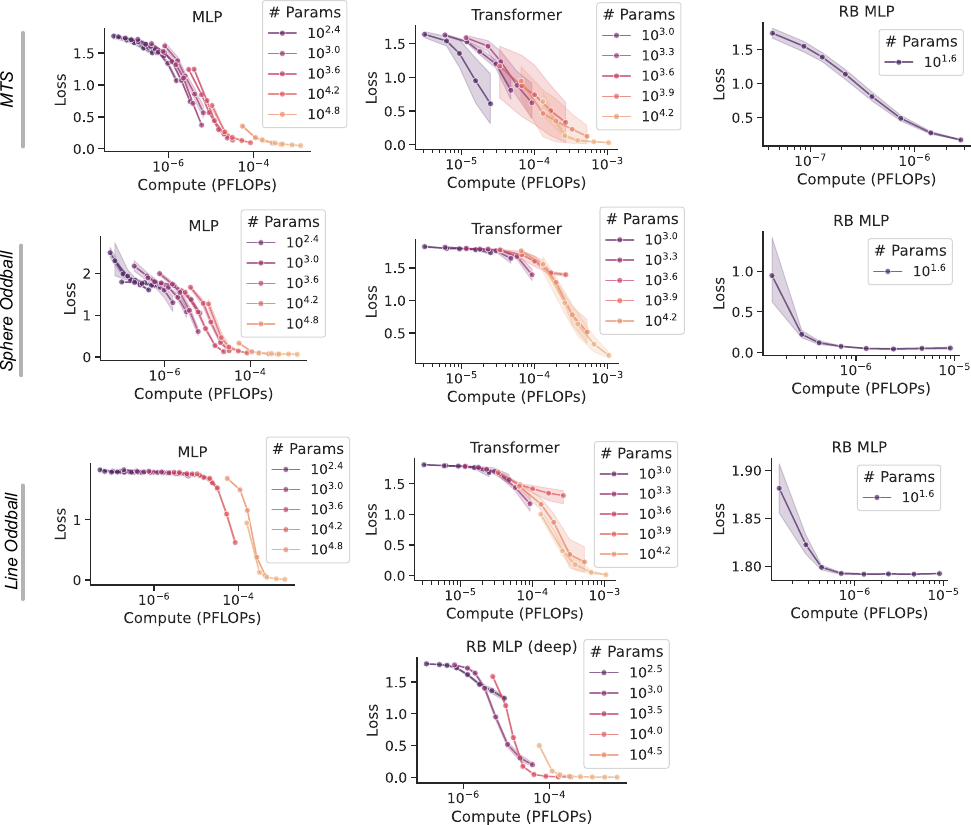}
    \caption{\textbf{Relational reasoning supplementary figures.} We plot the loss obtained across each architecture as a function of compute. Lines connect common models, with colors denoting different parameter counts. Hence, a single line traces the trajectory of a model across different training iterations. Note: the RB MLP does not have configurable widths or depths, so all RB MLPs have the same parameter count. \textbf{(all)} Shaded regions correspond to 95 percent confidence intervals computed across 5 replications.}
    \label{fig:relational_supp}
\end{figure}

\begin{figure}
    \centering
    \includegraphics{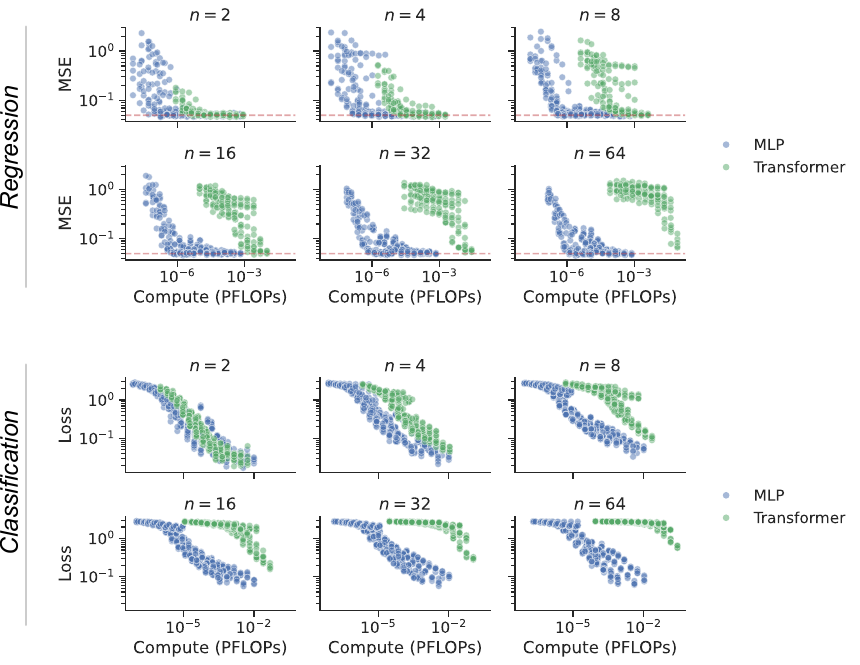}
    \caption{\textbf{Simple regression and classification with varying input dimension.} We plot the MSE (for regression) or cross entropy loss (for classification) as a function of compute across varying input dimension $n$. The red dashed lines in the regression plots correspond to the minimum attainable MSE. Each point corresponds to a single model with a particular parameter and training time. In all cases, reducing the dimension of the input reduces the gap between Transformers and MLPs, with the gap effectively vanishing for $n = 2$ dimensional inputs.}
    \label{fig:simple_supp}
\end{figure}

\end{document}